\documentclass[mlmain,onecolumn]{jmlr}

\usepackage{tikz}
\usetikzlibrary{arrows.meta}
\usepackage{enumerate}
\usepackage{amssymb}
\usepackage{amsmath}
\usepackage{graphicx}
\usepackage{extarrows}
\usepackage{endnotes}
\usepackage{verbatim}
\usepackage{color}
\usepackage{comment}
\usepackage{tikz}
\usepackage{quiver}
\usepackage{url}
\usepackage{float}
\providecommand{\qedhere}{\tag*{\jmlrBlackBox}}
\makeatletter
\renewcommand*{\@jmlrenddoc}{\FloatBarrier}
\makeatother

\firstpageno{1}
\jmlryear{2026}
\jmlrworkshop{Symmetry and Geometry in Neural Representations}

\title[Persistence, Exposure, and Credit Routing in Linear tPC]{Persistence, Exposure, and Credit Routing in Linear Temporal Predictive Coding}

\author{\Name{Akira Tanase} \Email{akira.tanase23@ic.ac.uk}\\
\addr Sakana AI\\
\addr Imperial College London
\AND
\Name{Julian Gould} \Email{juliangould@sakana.ai}\\
\addr Sakana AI
\AND
\Name{Jeffrey Seely} \Email{jeffrey@sakana.ai}\\
\addr Sakana AI}

\begin{document}

\maketitle

\begin{abstract}
 Temporal Predictive Coding (tPC) networks offer a layer-local alternative to backpropagation. For linear tPC, we show, by applying tools from cellular sheaf theory, that temporal credit assignment is a tug-of-war between spatial displacement and temporal errors, whose ratios are modulated by persistence and exposure operators $P$ and $Q$. Spatial inference therefore gives us the effective recurrent transition $F=PA$, whilst future credit is filtered by $P^\ast$. This creates a persistence--error tradeoff in spatial modes: under non-amplifying invariant recurrence, modes that preserve influence over long delays simultaneously fail to expose that influence strongly as temporal error. Recurrence can circumvent this via routing between persistent and exposed modes.
\end{abstract}

\begin{keywords}
  Predictive Coding, Cellular Sheaves, Temporal Credit Assignment, Recurrent Neural Networks
\end{keywords}

\section{Introduction}
Predictive coding networks are a layer-local alternative to backpropagation for training hierarchical neural networks \citep{rao1999predictive,whittington2017approximation,salvatori2026survey}. In temporal predictive coding, we learn sequential data by adding recurrent connections between corresponding layers \citep{millidge2024temporal,ngkeekwong2026complex}. We focus on linear tPC for autoregressive sequence modelling of observations $o_0,\dots,o_T$, processed sequentially through time. At time $t$, the current observation is fixed at the bottom layer, $z_{0,t}=o_t$, whilst the remaining activities are inferred using both the current observation and temporal predictions from the previously inferred slice; the resulting state is then used to predict time $t+1$. Formally, we have activities $z_{\ell,t}\in H_\ell$ indexed across layers $\ell=0,\dots,L$ and time $t=0,\dots,T$. Within each time slice, $W_\ell:H_{\ell+1}\to H_\ell$ maps higher-layer activity $z_{\ell+1,t}$ to a prediction of $z_{\ell,t}$, whilst across time $A_\ell:H_\ell\to H_\ell$ maps $z_{\ell,t}$ to a prediction of $z_{\ell,t+1}$. The maps $W_\ell$ and $A_\ell$ are shared across time. These maps define receiver-side spatial and temporal edge discrepancies $r^s_{\ell,t}=z_{\ell,t}-W_\ell z_{\ell+1,t}$ and $r^\tau_{\ell,t}=z_{\ell,t+1}-A_\ell z_{\ell,t}$.

\begin{figure}[t]
  \centering
  \begin{tikzpicture}[
  x=3.25cm,
  y=1.15cm,
  >=Latex,
  activation/.style={inner sep=3pt},
  map/.style={->, thin},
  temporal label/.style={font=\small, inner sep=1.5pt},
  spatial label/.style={font=\small, inner sep=1pt,
    anchor=east, xshift=-5pt}
]
  \fill[black!4] (0.54,0.40) rectangle (1.46,4.42);

  \node[activation] (zm1L) at (0,4) {$H_L$};
  \node[activation] (ztL)  at (1,4) {$H_L$};
  \node[activation] (zp1L) at (2,4) {$H_L$};

  \node[activation] (zm1Lm1) at (0,3) {$H_{L-1}$};
  \node[activation] (ztLm1)  at (1,3) {$H_{L-1}$};
  \node[activation] (zp1Lm1) at (2,3) {$H_{L-1}$};

  \node at (0,2.4) {$\vdots$};
  \node at (1,2.4) {$\vdots$};
  \node at (2,2.4) {$\vdots$};

  \node[activation] (zm11) at (0,1.8) {$H_1$};
  \node[activation] (zt1)  at (1,1.8) {$H_1$};
  \node[activation] (zp11) at (2,1.8) {$H_1$};

  \node[activation] (zm10) at (0,0.8) {$H_0$};
  \node[activation] (zt0)  at (1,0.8) {$H_0$};
  \node[activation] (zp10) at (2,0.8) {$H_0$};

  \draw[map] (zm1L) -- node[midway, spatial label] {$W_{L-1}$} (zm1Lm1);
  \draw[map] (ztL)  -- node[midway, spatial label] {$W_{L-1}$} (ztLm1);
  \draw[map] (zp1L) -- node[midway, spatial label] {$W_{L-1}$} (zp1Lm1);
  \draw[map] (zm11) -- node[midway, spatial label] {$W_0$} (zm10);
  \draw[map] (zt1)  -- node[midway, spatial label] {$W_0$} (zt0);
  \draw[map] (zp11) -- node[midway, spatial label] {$W_0$} (zp10);

  \draw[map] (zm1L) -- node[temporal label, above] {$A_L$} (ztL);
  \draw[map] (ztL)  -- node[temporal label, above] {$A_L$} (zp1L);
  \draw[map] (zm1Lm1) -- node[temporal label, above] {$A_{L-1}$} (ztLm1);
  \draw[map] (ztLm1)  -- node[temporal label, above] {$A_{L-1}$} (zp1Lm1);
  \draw[map] (zm11) -- node[temporal label, above] {$A_1$} (zt1);
  \draw[map] (zt1)  -- node[temporal label, above] {$A_1$} (zp11);
  \draw[map] (zm10) -- node[temporal label, above] {$A_0$} (zt0);
  \draw[map] (zt0)  -- node[temporal label, above] {$A_0$} (zp10);

  \node[font=\small] at (0,4.65) {time $t-1$};
  \node[font=\small\bfseries] at (1,4.65) {time $t$};
  \node[font=\small] at (2,4.65) {time $t+1$};
  \node[font=\small, anchor=east] at (-0.62,4) {$\ell=L$};
  \node[font=\small, anchor=east] at (-0.62,3) {$\ell=L-1$};
  \node[font=\small, anchor=east] at (-0.62,1.8) {$\ell=1$};
  \node[font=\small, anchor=east] at (-0.62,0.8) {$\ell=0$};
\end{tikzpicture}
  \caption{Linear temporal predictive coding architecture. Spatial maps
  $W_\ell$ predict down each time slice, whilst temporal
  maps $A_\ell$ predict forward across time. At time $t$,
  the observation $z_{0,t}=o_t$ is fixed and inference updates the hidden activities in the shaded slice.}
  \label{fig:tpc-setup}
\end{figure}

Training proceeds sequentially through time. At each time $t$, during \emph{inference}, the weights and the previously inferred state $z_{\bullet,t-1}^\star$ are kept fixed, the current observation $z_{0,t}=o_t$ is clamped, and only the remaining activities $z_{1,t},\dots,z_{L,t}$ are updated to minimise the current slice energy. This energy aggregates within-slice spatial residuals with the incoming temporal residuals from the transition $t-1\to t$. The resulting inferred state is denoted by $z_{\bullet,t}^\star$ and is used to generate temporal predictions for time $t+1$ \citep{millidge2024temporal}. After the full trajectory has been inferred, during \emph{learning}, the inferred activities are held fixed and the same shared maps are updated using the residuals and incident activities accumulated across time: each $W_\ell$ from the spatial residuals in each slice, and each $A_\ell$ from the temporal residuals on the incoming transitions.

Training is a tug-of-war between spatial and temporal predictions. By constructing a directed network sheaf over the tPC computational graph (\hyperref[sec:directed-network-sheaves]{Section~\ref*{sec:directed-network-sheaves}}), we realise tPC as directed sheaf diffusion (\hyperref[sec:directed-sheaf-diffusion]{Section~\ref*{sec:directed-sheaf-diffusion}}). We then show in \hyperref[sec:spatial-inference]{Section~\ref*{sec:spatial-inference}} that exact inference finds a compromise between the spatial prediction and temporal proposal, modulated by operators \(P\) and \(Q\), giving the effective recurrent map \(F=PA\). In \hyperref[sec:spatial-modes]{Section~\ref*{sec:spatial-modes}}, we analyse this \(P,Q\) split along individual eigenmodes of the spatial sheaf Laplacian, finding that low-penalty modes are preferentially retained whilst high-penalty modes are preferentially exposed as temporal error. In \hyperref[sec:temporal-credit]{Section~\ref*{sec:temporal-credit}}, we show that the future temporal credit omitted by standard tPC learning is filtered by \(P^\ast\), whilst \hyperref[sec:long-term-routing]{Section~\ref*{sec:long-term-routing}} shows how recurrence can route influence between persistent and exposed modes over long time horizons. Finally, in \hyperref[sec:delayed-recall-example]{Section~\ref*{sec:delayed-recall-example}}, we validate mode routing on a single-hidden-layer network.

\section{Directed Network Sheaves}
\label{sec:directed-network-sheaves}

Cellular sheaves \citep{curry2014sheaves} are a powerful framework for local-to-global organisation and consensus seeking on graphs \citep{hansen2019spectral,bodnar2022neural,hansen2021opinion}. We use the directed cellular sheaf construction of \citet{ribeiro2026cooperative}.

\begin{definition}[Directed network sheaf]
Given a directed graph $G=(V,E)$, a directed network sheaf $(\mathcal F,S,T)$ assigns vector spaces $\mathcal F(v)$ and $\mathcal F(e)$, called vertex and edge stalks, to each vertex $v\in V$ and edge $e\in E$, respectively, and, for every directed edge $e=(u\to v)$, linear source and target maps
\begin{equation*}
  S_e:\mathcal F(u)\to\mathcal F(e),
  \qquad
  T_e:\mathcal F(v)\to\mathcal F(e).
\end{equation*}
Vertex assignments form the $0$-cochain space $C^0(G;\mathcal F):=\bigoplus_{v\in V}\mathcal F(v)$, whilst edge assignments form the $1$-cochain space $C^1(G;\mathcal F):=\bigoplus_{e\in E}\mathcal F(e)$.
Writing $s(e)$ and $t(e)$ for the source and target vertices, respectively, the coboundary $D:C^0(G;\mathcal F)\to C^1(G;\mathcal F)$ gives the induced edge discrepancies,
\begin{equation*}
  (Dx)_e=T_ex_{t(e)}-S_ex_{s(e)}.
\end{equation*}
\end{definition}

\section{Temporal Predictive Coding as Directed Sheaf Diffusion}
\label{sec:directed-sheaf-diffusion}
\citet{seely2025sheaf} recasts linear predictive coding networks in terms of cellular sheaves. We extend this framework to tPC networks by constructing a directed network sheaf over the network's computational graph.

We assign vertex and edge stalks $\mathcal F(v_{\ell, t})=\mathcal F(e^s_{\ell,t})=\mathcal F(e^\tau_{\ell,t})=H_\ell$, where vertices and edges are denoted by \(v_{\ell,t}\) and \(e^\bullet_{\ell,t}\), respectively, and choose source and target maps so that $D$ recovers the spatial and temporal edge residuals. 
{\small
\begin{equation*}
\begin{aligned}
e^s_{\ell,t}:(\ell+1,t)\to(\ell,t):\qquad&
(S_e,T_e)=(W_\ell,I),
&
(Dz)_e
=z_{\ell,t}-W_\ell z_{\ell+1,t}
=r^s_{\ell,t},\\
e^\tau_{\ell,t}:(\ell,t)\to(\ell,t+1):\qquad&
(S_e,T_e)=(A_\ell,I),
&
(Dz)_e
=z_{\ell,t+1}-A_\ell z_{\ell,t}
=r^\tau_{\ell,t}.
\end{aligned}
\end{equation*}
}

In non-temporal PC, spatial residuals act on both incident activities. Temporal PC is instead causal: a temporal residual on $t\to t+1$ may update its receiving state at $t+1$, but cannot modify the already inferred state at $t$. We encode this with a response operator $R:C^0(G;\mathcal F)\to C^1(G;\mathcal F)$. Spatial edges respond reciprocally, whereas temporal edges are receiver-only:
\begin{equation*}
  D_{e^s_{\ell,t}}
  =R_{e^s_{\ell,t}}
  =[-W_\ell\;\;I],
  \qquad
  D_{e^\tau_{\ell,t}}
  =[-A_\ell\;\;I],
  \qquad
  R_{e^\tau_{\ell,t}}
  =[0\;\;I].
\end{equation*}
Thus $D$ determines the residual, whilst $R^\ast$ determines where that residual acts.\footnote{Throughout, adjoints are taken with respect to the standard inner products on the stalks, with precision operators written explicitly.} Writing $D_s$ for the spatial coboundary and $D_\tau,R_\tau$ for the temporal operators, one obtains the directed sheaf Laplacian
$$\mathcal{L}_{\mathrm{tot}}( z) =  -D_s^\ast\Sigma D_sz
  -R_\tau^\ast\Lambda D_\tau z.$$
Here $\Sigma\succ0$ and $\Lambda\succ0$ are symmetric block-diagonal precision operators. Making temporal responses reciprocal by setting $R_\tau=D_\tau$ recovers ordinary weighted sheaf diffusion.

\section{Spatial Inference Filters Temporal Recurrence}
\label{sec:spatial-inference}

We now consider inference at a fixed time step $t$. Clamping the bottom-layer activity to the current observation $z_{0,t}=o_t$ and collecting the free activations into
\begin{equation*}
  z_t:=(z_{1,t},\ldots,z_{L,t})^\top\in H_{ f},
  \qquad
  H_f:=\bigoplus_{\ell=1}^{L}H_\ell,
\end{equation*}
we can write the stacked spatial residual in affine form as
\begin{equation*}
  r_t^s
  =\delta z_t-b_t
  =\bigl(o_t-W_0z_{1,t},\,
    z_{1,t}-W_1z_{2,t},\,\ldots,\,
    z_{L-1,t}-W_{L-1}z_{L,t}\bigr)^\top,
  \qquad
  b_t:=(-o_t,0,\ldots,0)^\top,
\end{equation*}
where $\mathcal{E}_t^s:=H_0\oplus\cdots\oplus H_{L-1}$ denotes the spatial edge space at time $t$, and $\delta:H_f\to \mathcal{E}_t^s$ denotes the spatial coboundary restricted to the free vertices, with the clamped observation absorbed into $b_t$.
On these coordinates, define the temporal map $A$ and its incoming proposal
$a_t$ by
\begin{equation*}
  A:=\bigoplus_{\ell=1}^{L}A_\ell:
  H_f \longrightarrow H_f,
  \qquad
  a_t:=Az^{\star}_{t-1}.
\end{equation*}
Accordingly, inference at time $t$ minimises the slice energy
\begin{equation*}
  E_t(z_t; a_t, b_t) = \frac{1}{2}\lVert \delta z_t - b_t \rVert^2_{\Sigma} + \frac{1}{2}\lVert z_t - a_t \rVert_{\Lambda}^2.
\end{equation*}
Writing \(\mathcal L=\delta^\ast\Sigma\delta\) for the \emph{spatial sheaf Laplacian}, let
\begin{equation*}
  y_t:=\mathcal L^\dagger\delta^\ast\Sigma b_t,
  \qquad
  \mathcal L y_t=\delta^\ast\Sigma b_t,
\end{equation*}
where $\mathcal L^\dagger$ denotes the Moore--Penrose pseudoinverse.

In this clamped setting, directed sheaf diffusion restricts to the free activities, and yields the corresponding relative flow
\begin{equation*}
  \dot z_t=-\delta^\ast\Sigma(\delta z_t-b_t)-\Lambda(z_t-a_t)=-\mathcal L(z_t-y_t)-\Lambda(z_t-a_t).
\end{equation*}
Since $\mathcal L+\Lambda\succ0$, this flow has a unique inference equilibrium. Define
\begin{equation*}
  P=(\mathcal L+\Lambda)^{-1}\Lambda,
  \qquad
  Q=(\mathcal L+\Lambda)^{-1}\mathcal L,
  \qquad
  P+Q=I.
\end{equation*}
The inference equilibrium is
\begin{equation}\label{eq: zstar}
  \boxed{z_t^\star=Pa_t+Qy_t \, . }
\end{equation}
We call $P$ the \emph{persistence operator} and $Q$ the \emph{exposure operator}. Equation~\eqref{eq: zstar} shows that inference mediates between the temporal proposal and spatial optimum, retaining one component in the state and exposing the complementary component as temporal error. Furthermore, the irreducible tension between spatial and temporal error is captured by the frustration $\kappa_t := \delta a_t - b_t$. A sheaf-theoretic derivation of the inference equilibrium and inference maps, as well as a cohomological interpretation of the frustration $\kappa_t$ is provided in Appendix~\ref{apx: sheaf derivation}.

Writing $m_t = a_t - y_t$, we obtain
\begin{equation*}
  z_t^\star-y_t=Pm_t,
  \qquad
  a_t-z_t^\star=Qm_t.
\end{equation*}
Finally, substituting $a_t=Az_{t-1}^\star$ and holding the current observation fixed gives
\begin{equation*}
  z_t^\star=PAz_{t-1}^\star+Qy_t.
\end{equation*}
The learned recurrent map is $A$, whilst one-step state influence propagates through the inference-corrected transition $F:=PA$.

\section{Spatial Modes Balance Competing Predictions}
\label{sec:spatial-modes}

In this section, we suppose the temporal precision is isotropic: $\Lambda = \lambda I$, where $\lambda \in \mathbb{R}_{>0}$. We recall that the spatial sheaf Laplacian $\mathcal L = \delta^\ast\Sigma\delta$ is self-adjoint and positive semidefinite. It thus admits an orthonormal eigendecomposition
\begin{equation*}
  \mathcal L = \sum_i \ell_i \Pi_i, \quad \ell_i \geq 0,
\end{equation*}
where $\Pi_i$ is the orthogonal projection onto the eigenspace of $\ell_i$. The inference operators become
\begin{equation*}
P=\lambda(\mathcal L+\lambda I)^{-1}, \qquad Q=\mathcal L(\mathcal L+\lambda I)^{-1}=I-P.
\end{equation*}
Notably, $P$ and $Q$ preserve the eigenspaces of $\mathcal{L}$.

\begin{proposition}[Mode law]
Let $v_i$ be a simultaneous eigenvalue of $\mathcal{L},Q$, and $P$, with corresponding eigenvalues $\ell_i, q_i, p_i$. Let $\mu_i := \ell_i/\lambda$. Then
\begin{equation*}
  p_i=\frac{1}{1+\mu_i},
  \qquad
  q_i=\frac{\mu_i}{1+\mu_i},
  \qquad
  p_i+q_i=1.
\end{equation*}
\end{proposition}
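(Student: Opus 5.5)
The plan is to compute directly with the spectral forms of $P$ and $Q$ established just before the proposition. The statement says ``eigenvalue'' for $v_i$, but the intended reading is clearly a common \emph{eigenvector} $v_i\neq 0$ with $\mathcal L v_i=\ell_i v_i$. I take that as the hypothesis, since $P$ and $Q$ are functions of $\mathcal L$ and so share its eigenspaces. With $\Lambda=\lambda I$ and $\lambda>0$, the operator $\mathcal L+\lambda I$ is self-adjoint with eigenvalues $\ell_j+\lambda\ge\lambda>0$, so it is invertible and the formulas $P=\lambda(\mathcal L+\lambda I)^{-1}$ and $Q=\mathcal L(\mathcal L+\lambda I)^{-1}$ make sense.

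\textbf{Step 1 (resolvent on an eigenvector).} From $\mathcal L v_i=\ell_i v_i$ we get $(\mathcal L+\lambda I)v_i=(\ell_i+\lambda)v_i$. Since $\ell_i+\lambda\neq 0$, applying $(\mathcal L+\lambda I)^{-1}$ to both sides and dividing gives
\[
(\mathcal L+\lambda I)^{-1}v_i=\frac{1}{\ell_i+\lambda}\,v_i .
\]
Equivalently, one can use $(\mathcal L+\lambda I)^{-1}=\sum_j(\ell_j+\lambda)^{-1}\Pi_j$ together with $\Pi_j v_i=\delta_{ij}v_i$.

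\textbf{Step 2 (read off $p_i,q_i$).} Applying $P$ gives $Pv_i=\frac{\lambda}{\ell_i+\lambda}v_i$, so $p_i=\frac{\lambda}{\ell_i+\lambda}=\frac{1}{1+\mu_i}$ after dividing numerator and denominator by $\lambda$. For $Q$, note that $\mathcal L$ commutes with its own resolvent, so $Qv_i=(\mathcal L+\lambda I)^{-1}\mathcal L v_i=\frac{\ell_i}{\ell_i+\lambda}v_i$. This gives $q_i=\frac{\mu_i}{1+\mu_i}$.

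\textbf{Step 3 (sum).} Either add the two fractions directly, or apply the earlier identity $P+Q=I$ to $v_i$ to get $p_i+q_i=1$.

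There is no real obstacle here. The only points needing care are the invertibility of $\mathcal L+\lambda I$, which follows from $\mathcal L\succeq 0$ and $\lambda>0$, and the commutation of $\mathcal L$ with $(\mathcal L+\lambda I)^{-1}$ used in Step 2, which is immediate from the spectral decomposition.
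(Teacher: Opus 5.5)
Your proof is correct and takes essentially the paper's route: apply $P=\lambda(\mathcal L+\lambda I)^{-1}$ and $Q=\mathcal L(\mathcal L+\lambda I)^{-1}$ directly to $v_i$, read off $p_i=\lambda/(\ell_i+\lambda)=1/(1+\mu_i)$ and $q_i=\ell_i/(\ell_i+\lambda)=\mu_i/(1+\mu_i)$, then sum. Reading ``eigenvalue'' as ``eigenvector'' is the intended meaning, and your explicit check that $\mathcal L+\lambda I$ is invertible is a minor addition that the paper leaves implicit.
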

Suppose we perturb the free hidden activities in the current slice by $v_i$. Since
\begin{equation*}
  \langle v_i,\mathcal L v_i\rangle=\|\delta v_i\|_{\Sigma}^2,
\end{equation*}
its eigenvalue $\ell_i$ measures the spatial penalty induced by the cross-layer perturbation, whilst $\mu_i=\ell_i/\lambda$ compares this penalty to the temporal one. Accordingly, low-$\mu_i$ modes are preferentially retained by inference, whereas high-$\mu_i$ modes are preferentially exposed as temporal error.

Since \(p_i+q_i=1\), inference resolves the spatial-temporal disagreement modewise: \(p_i\) gives the fraction by which the inferred state moves from the spatial solution towards the temporal proposal, whilst \(q_i\) gives the fraction left as temporal error. When recurrence places a proposal component in mode $i$, $p_i$ is its persistence gain and $q_i$ its exposure gain. Although this modal discussion assumes $\Lambda=\lambda I$, the following identity holds for arbitrary $\Lambda\succ0$.

\begin{theorem}[One-slice persistence--error identity]
For any $x\in H_{f}$,
\begin{equation*}
\|x\|_\Lambda^2
=
\|Px\|_\Lambda^2
+
\|Qx\|_\Lambda^2
+
2\|\delta Px\|_\Sigma^2.
\end{equation*}
In particular, for the inference mismatch $m=a-y$,
\begin{equation*}
\|m\|_\Lambda^2
=
\underbrace{\|Pm\|_\Lambda^2}_{\text{accepted displacement}}
+
\underbrace{\|Qm\|_\Lambda^2}_{\text{temporal error}}
+
\underbrace{2\|\delta Pm\|_\Sigma^2}_{\text{induced spatial interference}}.
\end{equation*}
\end{theorem}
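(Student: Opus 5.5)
Write $x=Px+Qx$, using $P+Q=I$ from Section~\ref{sec:spatial-inference}, and expand the $\Lambda$-norm:
\begin{equation*}
\|x\|_\Lambda^2=\|Px\|_\Lambda^2+\|Qx\|_\Lambda^2+2\langle Px,\Lambda Qx\rangle .
\end{equation*}
This is valid because $\Lambda$ is self-adjoint, so $\langle Qx,\Lambda Px\rangle=\langle Px,\Lambda Qx\rangle$. The whole proof therefore reduces to the single identity
\begin{equation*}
\langle Px,\Lambda Qx\rangle=\|\delta Px\|_\Sigma^2 .
\end{equation*}

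The key step is to rewrite $\Lambda Q$ in terms of $\mathcal L P$. Let $M:=\mathcal L+\Lambda\succ0$. By definition $\Lambda P=\Lambda M^{-1}\Lambda$ and $\Lambda Q=\Lambda M^{-1}\mathcal L$. I will show that $\Lambda Q=\mathcal L P$, i.e.
\begin{equation*}
\Lambda M^{-1}\mathcal L=\mathcal L M^{-1}\Lambda .
\end{equation*}
Substitute $\mathcal L=M-\Lambda$ on both sides. The left side becomes $\Lambda M^{-1}(M-\Lambda)=\Lambda-\Lambda M^{-1}\Lambda$. The right side becomes $(M-\Lambda)M^{-1}\Lambda=\Lambda-\Lambda M^{-1}\Lambda$. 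The two agree, so $\Lambda Q=\mathcal LP$. (Equivalently, $\mathcal L P=\Lambda-\Lambda M^{-1}\Lambda$ is self-adjoint.)

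With this in hand,
\begin{equation*}
\langle Px,\Lambda Qx\rangle=\langle Px,\mathcal L Px\rangle=\langle Px,\delta^\ast\Sigma\delta Px\rangle=\|\delta Px\|_\Sigma^2,
\end{equation*}
using $\mathcal L=\delta^\ast\Sigma\delta$. Substituting back gives the stated identity for arbitrary $\Lambda\succ0$. The particular case follows by setting $x=m=a-y$; the labels on the terms come from $z^\star-y=Pm$ and $a-z^\star=Qm$ in Section~\ref{sec:spatial-inference}.

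The only delicate point is that $P$ and $Q$ are not $\Lambda$-orthogonal projections in general: $\mathcal L$ and $\Lambda$ need not commute, so neither operator is self-adjoint in either inner product. The cross term is therefore not zero. The commutation-free identity $\Lambda Q=\mathcal LP$ is what turns it into the induced spatial interference term, and the substitution $\mathcal L=M-\Lambda$ is what makes that identity go through.
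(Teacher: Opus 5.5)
Your proof is correct and is essentially the paper's argument: you expand $x=Px+Qx$ in the $\Lambda$-norm, convert the cross term using the force-balance identity $\mathcal L P=\Lambda Q$ (which the paper also proves by substituting $\mathcal L=(\mathcal L+\Lambda)-\Lambda$), and recognise $\langle Px,\mathcal L Px\rangle=\|\delta Px\|_\Sigma^2$. One correction to your closing remark, which the proof does not rely on: $P$ and $Q$ are in fact self-adjoint in the $\Lambda$-inner product, since $\Lambda P=\Lambda(\mathcal L+\Lambda)^{-1}\Lambda$ and $\Lambda Q=\mathcal L P$ are symmetric (as your own parenthetical shows), so the cross term is nonzero because $P$ is not idempotent rather than because $\mathcal L$ and $\Lambda$ fail to commute, and it is already nonzero in the commuting isotropic case, where a pure mode's cross-term share is $2p_iq_i$.
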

The final term appears since $Pm$ and $Qm$ are not generally $\Lambda$-orthogonal. Here, $Pm = z^\star - y$ is the displacement away from the spatial optimum accepted by the temporal proposal, and therefore $\delta Pm=\delta(z^\star-y)$ is the additional spatial residual induced by moving away from that optimum. Thus, the identity accounts for the mismatch through state displacement, proposal-side temporal error, and the additional spatial residual induced by that displacement. For a pure mode $v_i$, the corresponding shares are $p_i^2$, $q_i^2$, and $2p_iq_i$, with maximal spatial interference at $\mu_i=1$.

\section{What Local Temporal Learning Omits}
\label{sec:temporal-credit}

We now turn our attention to learning. The standard tPC temporal update is temporally local: it cannot determine whether a locally favourable change will impair or improve predictions several steps later \citep{ngkeekwong2026complex}. Since the inferred state at time $t$ causally predicts states at future time steps, we can quantify this omitted dependence by seeing how a perturbation of the current state affects future scalar objectives, such as later slice energies or a downstream task loss.

Let $\widetilde E_t(a_t,b_t):=\min_{z_t}E_t(z_t;a_t,b_t)$ denote the
reduced current-slice energy. With parameters held fixed, let $\Phi_t$ denote
any future scalar objective whose dependence on the proposal $a_t$ is through
the inferred state $z_t^\star$ and subsequent states, and write
$g_t:=\nabla_{z_t^\star}\Phi_t$ for its sensitivity to the current inferred
state. Future sensitivity is transported backward through
the adjoints of the inference-corrected transition $F=PA$. The current
temporal proposal $a_t$ affects this future objective through $z_t^\star$,
whilst $\widetilde E_t$ provides the local temporal signal. Combining these
contributions gives the exact proposal-level credit.

\begin{theorem}[Temporal credit split]
\begin{equation*}
    \nabla_{a_t}\bigl(\widetilde E_t+\Phi_t\bigr)
    =
    \underbrace{\Lambda Q(a_t-y_t)}
        _{\text{local temporal-error signal}}
    +
    \underbrace{P^\ast g_t}
        _{\text{future credit}}.
\end{equation*}
\end{theorem}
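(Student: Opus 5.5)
The plan is to compute the two gradients separately and add them, using only the closed-form equilibrium $z_t^\star=Pa_t+Qy_t$ from \eqref{eq: zstar} and the identities $P+Q=I$, $P=(\mathcal L+\Lambda)^{-1}\Lambda$. Throughout, the parameters, $b_t$ and hence $y_t$ are held fixed, so $a_t$ is the only variable.

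\emph{Step 1: the local term, via an envelope argument.} Since $E_t(\cdot;a_t,b_t)$ is strictly convex (because $\mathcal L+\Lambda\succ0$), its minimiser $z_t^\star(a_t)$ is unique and affine in $a_t$. By the envelope theorem, $\nabla_{a_t}\widetilde E_t=\partial_{a_t}E_t(z;a_t,b_t)\big|_{z=z_t^\star}$, because the $z$-gradient vanishes at the minimiser. The explicit $a_t$-dependence of $E_t$ sits only in the temporal term $\tfrac12\|z_t-a_t\|_\Lambda^2$, so
\[
\nabla_{a_t}\widetilde E_t=-\Lambda(z_t^\star-a_t)=\Lambda(a_t-z_t^\star).
\]
The displayed identity $a_t-z_t^\star=Qm_t$ with $m_t=a_t-y_t$, derived from \eqref{eq: zstar} and $P+Q=I$, then gives $\Lambda Q(a_t-y_t)$. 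To avoid relying on the envelope theorem, one can instead check directly that $\widetilde E_t$ is quadratic in $a_t$ and differentiate. The envelope route is shorter, so I would use it.

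\emph{Step 2: the future term, via the chain rule.} By hypothesis, $\Phi_t$ depends on $a_t$ only through $z_t^\star$ and later states, and later states are themselves functions of $z_t^\star$ with parameters and future observations fixed. Hence
\[
\nabla_{a_t}\Phi_t=\bigl(\partial z_t^\star/\partial a_t\bigr)^\ast g_t .
\]
From \eqref{eq: zstar}, $z_t^\star$ is affine in $a_t$ with linear part $P$, because $Qy_t$ does not depend on $a_t$. The Jacobian is therefore $P$, and its adjoint with respect to the standard inner product gives $P^\ast g_t$.

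\emph{Step 3: combine.} Summing the two terms gives the stated formula.

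The only delicate point is the adjoint convention. $P$ is not self-adjoint in the standard inner product; indeed $P^\ast=\Lambda(\mathcal L+\Lambda)^{-1}$. Gradients must be taken consistently with the standard inner product, as in the paper's footnote, with the precision $\Lambda$ kept explicit. This is why the local term carries $\Lambda$ while the future term carries $P^\ast$ rather than $P$.

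I would also add a sanity check: if $\Phi_t$ is itself a squared error, then $P^\ast g_t=\Lambda(\mathcal L+\Lambda)^{-1}g_t$ should agree with differentiating through $z_t^\star=(\mathcal L+\Lambda)^{-1}(\Lambda a_t+\mathcal L y_t)$ directly. This confirms there are no hidden $a_t$-dependencies, for example through $y_t$, which depends only on $b_t$.
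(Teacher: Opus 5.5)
Your proposal is correct. Steps 2 and 3 coincide with the paper's argument: the Jacobian $\partial z_t^\star/\partial a_t=P$ comes from the affine equilibrium, and then $d\Phi_t=\langle g_t,P\,da_t\rangle=\langle P^\ast g_t,da_t\rangle$.

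The difference is in the local term. The paper does not use an envelope argument. It computes the reduced energy in closed form, using the spatial residual decomposition and the force-balance identity $\mathcal L P=\Lambda Q$, to get $\widetilde E_t=\tfrac12\|\eta_t\|_\Sigma^2+\tfrac12\langle m_t,\Lambda Q m_t\rangle$. It then differentiates, using the self-adjointness of $\Lambda Q=\Lambda-\Lambda(\mathcal L+\Lambda)^{-1}\Lambda$.

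Your envelope route is shorter and needs none of these identities. Since $z_t^\star$ is affine in $a_t$, the chain rule gives $\nabla_{a_t}\widetilde E_t=P^\ast\nabla_z E_t(z_t^\star)+\partial_{a_t}E_t(z_t^\star)=\Lambda(a_t-z_t^\star)$. This makes the ``local temporal-error signal'' label literal: it is exactly the precision-weighted equilibrium temporal residual that standard tPC uses. The argument also extends unchanged to non-quadratic slice energies with a differentiable unique minimiser.

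The paper's computation buys more structure. It shows that the reduced energy is a quadratic form in the mismatch, with effective precision $\Lambda Q$, a self-adjoint positive semidefinite Schur complement. This links the local signal directly to force balance and to the one-slice persistence--error identity.
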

Since $a_t=Az_{t-1}^\star$, this proposal-level decomposition induces the
corresponding temporal-weight updates by the chain rule; the explicit
layerwise gradients are given in
\hyperref[app:temporal-weight-gradients]{Appendix~\ref*{app:temporal-weight-gradients}}.
The standard tPC temporal update retains only the
first term, updating the temporal weights from the temporal error available
at the current slice. The contribution lost is modulated by the adjoint of
$P$. In the modal case considered in the previous section, $P$ is
self-adjoint: $q_i$ is the exposure gain on the local temporal-error signal, whilst
$p_i$ is the persistence gain on returned future credit.

 \section{Long-Term Persistence and Mode Routing}
\label{sec:long-term-routing}

We now discuss how perturbations propagate through the network over long time
horizons. We begin with a perturbation
$x_{t-1}=\Delta z_{t-1}^{\star}$ to the inferred state at time $t-1$, whilst
keeping the input sequence fixed. At time $t$, recurrence gives a proposal
perturbation $\Delta a_t=Ax_{t-1}$. Since the input sequence is fixed, the
spatial optimum $y_t$ remains unchanged, and the inference split gives
\begin{equation*}
    x_t
    :=\Delta z_t^\star
    =P\Delta a_t
    =PAx_{t-1}
    =Fx_{t-1},
    \qquad
    e_t
    :=-(x_t-\Delta a_t)
    =Q\Delta a_t
    =QAx_{t-1}.
\end{equation*}
Here $x_t$ is the part of the perturbation carried into the next inferred
state, whilst $e_t$ is its proposal-side exposure at that slice.
At each transition, $A$ mixes the current state across spatial modes;
inference then retains the $P$-weighted component in the next state and
exposes the $Q$-weighted component as temporal error. These transitions are
between exact slice equilibria, rather than inference iterations within a slice.
Iterating the state response, a perturbation $x_t$ contributes after
$n\geq1$ further steps
\begin{equation*}
    x_{t+n}=F^n x_t,
    \qquad
    e_{t+n}=QAF^{n-1}x_t
    =QA(PA)^{n-1}x_t.
\end{equation*}

\begin{figure}[!t]
  \centering
  \input{figures/paper_native_credit_comparison}
  \caption{Full temporal credit learns to retain the cue in $C$ and expose it
  in $E$ at the response, whereas standard tPC does not.}
  \label{fig:delayed-recall-routing}
\end{figure}

\subsection{A Single-Mode Obstruction}
We now return to the isotropic case and consider what occurs when the perturbation remains in a single spatial mode. Consider the special single invariant-mode case where $v_i$ is additionally a fixed point of the temporal operator $A$, and $x_t$ is a perturbation in the spatial mode $v_i$. The perturbation is retained in the next state with gain $p_i$, whilst its induced temporal error is scaled by $q_i$. Therefore, after $n\geq1$ time steps,
\begin{equation*}
    x_{t+n}=p_i^n x_t,
    \qquad
    e_{t+n}=q_ip_i^{\,n-1}x_t.
\end{equation*}
Thus $p_i^n$ measures persistence of the earlier influence in the state,
whereas $q_ip_i^{n-1}$ measures its temporal-error report after $n$ time steps. Using $p_i=1/(1+\mu_i)$ and $q_i=\mu_i/(1+\mu_i)$, the delayed exposure gain is
\begin{equation*}
    q_ip_i^{\,n-1}
    =
    \frac{\mu_i}{(1+\mu_i)^n}.
\end{equation*}
For $n>1$, this is maximised at
\begin{equation*}
    \mu_i^\star=\frac{1}{n-1},
    \qquad
    \max_{\mu_i\geq0}q_ip_i^{\,n-1}
    =
    \frac1n
    \left(\frac{n-1}{n}\right)^{n-1}
    \sim\frac{1}{en}.
\end{equation*}
Therefore, even in the optimal case, the single-mode delayed exposure gain decays as $O(1/n)$. In particular, a perturbation in a single invariant mode cannot both persist strongly through inference steps and produce a strong delayed temporal-error report.

\subsection{Mode Routing}
The preceding section assumes that perturbations persist in a single mode. In general, however, this is untrue. Although $A$ is shared across time, its input state evolves under $PA$; temporal position can therefore be encoded in the state trajectory itself, so the same recurrence can produce weak intermediate temporal errors and a strong later report. Writing
\begin{equation*}
    P=\sum_i p_i\Pi_i,
    \qquad
    Q=\sum_i q_i\Pi_i,
    \qquad
    A_{ij}:=\Pi_iA\Pi_j,
\end{equation*}
the block $A_{ij}$ maps perturbations between eigenspaces $j$ and $i$ of the spatial Laplacian $\mathcal L$. For a perturbation $x_t$ and delay $n\geq1$, its temporal-error contribution at time $t+n$ is $e_{t+n}=QA(PA)^{n-1}x_t$. If there are $M$ spatial modes, expanding this propagation operator gives
\begin{equation*}
    QA(PA)^{n-1}
    =
    \sum_{i_0,\ldots,i_n\in\{1,\ldots,M\}}
    q_{i_n}
    \left(
        \prod_{k=1}^{n-1}p_{i_k}
    \right)
    A_{i_ni_{n-1}}\cdots A_{i_1i_0}.
\end{equation*}
Thus each route $i_0 \to \cdots \to i_n$ contributes a product of temporal mode transitions $A_{i_ni_{n-1}}\cdots A_{i_1i_0}$, weighted by the persistence gains of
the intermediate modes and the exposure gain of the final mode. The temporal map therefore determines how perturbations move between spatial modes, whilst inference determines how they persist or appear as temporal errors. As the expansion is a sum, different routes may reinforce or cancel.

Recurrence can thus evade the single-mode obstruction. A perturbation may travel through low-$\mu_i$ modes, for which $p_i \approx 1$ over the intermediate slices, and then be routed into a high-$\mu_i$ mode, for which $q_i \approx 1$ at the final time step, yielding an inference gain $q_{i_n}\prod_{k=1}^{n-1}p_{i_k}\approx1$.

\section{Learning Mode Routing in Delayed Recall}
\label{sec:delayed-recall-example}
We consider whether the mode-routing mechanism described in Section~\ref{sec:long-term-routing} can be learned in a real tPC network. We construct a one-hidden-layer linear tPC network with a fixed spatial Laplacian
\begin{equation*}
  \mathcal L = V \operatorname{diag}(\varepsilon, \dots, \varepsilon, \varepsilon^{-1}) V^{\top} \in \mathbb{R}^{N \times N},
  \qquad N=16, \quad \varepsilon=0.01.
\end{equation*}
where $V\in O(N)$ is drawn independently and fixed for each seed.  With this choice of $\varepsilon$, the first $N-1$ modes form a carrier subspace $C = \operatorname{span}\{v_1, \dots, v_{N-1}\}$ with persistence $p_C = (1+\varepsilon)^{-1} \approx 1$, with the final mode forming an exposure subspace $E = \operatorname{span}\{v_N\}$ with exposure $q_E = (1 + \varepsilon)^{-1} \approx 1$. We fix a corresponding $W$, and only $A$ is learned. At time 0, we inject a scalar cue $c \sim \mathcal{N}(0, 1)$ along the observation direction for $v_1$, followed by four Gaussian distractor slices at times $t=1,\dots,4$ using directions $v_2,\dots,v_5$. At response time $R=5$, we clamp a bottom-layer response target chosen to drive the hidden state towards $cv_N$, encouraging exposure of the cue. During inference, we sequentially clamp the cue, Gaussian distractors, and the response target while minimising the slice energy at each step. During learning, we use the same inference procedure and slice energies, but update $A$ two ways. In the standard tPC setting (b), we update $A$ using each slice's temporal error. By contrast, in the full credit setup (a) we update $A$ by differentiating the full predictive coding energy through the entire inferred trajectory. A single optimisation step consists of inferring the entire trajectory for every sequence in a minibatch and taking one Adam update of $A$. During testing, we draw a new cue and distractors, and infer sequentially. At $t = R$, we set the response observation to zero, infer the final state from the temporal proposal, and read the predicted cue from the exposed component of the final temporal error.

Results are shown in Figure~\ref{fig:delayed-recall-routing}. Across all 15 seeds, full temporal credit learned to retain--then--expose the cue, with median test MSE falling to $0.00548$, compared with the unit-error no-information baseline. By contrast, standard tPC failed to do so, with median test MSE remaining near baseline at $0.97057$ and cue-dependent carrier energy collapsing during training. This highlights an important limitation of standard tPC learning: while mode routing can overcome the single-mode obstruction, its temporally local update lacks the long-range credit required to learn this route in our experiment. Further details are provided in Appendix~\ref{app:paper-native-tpc}. On overlapping 30-digit copy, full credit learns the same route (median test MSE $0.011$ versus $0.999$ for local tPC; Appendix~\ref{app:copy-task}).

\section{Discussion}

Temporal predictive coding networks, while attractive for their layer-local,
temporally local learning rules, can struggle to learn long-range dependencies
\citep{potter2026longrange, ngkeekwong2026complex}.  This work introduces the persistence and exposure
operators $P$ and $Q$, which offer a new interpretable framework for
understanding how interactions between spatial and temporal errors shape credit
propagation over long horizons.  We show that a linear tPC network's recurrent map may
contain dynamics capable of both preserving and routing information.  However,
standard local tPC omits the future-credit term $P^\ast g_t$ captured by full
temporal credit and may therefore fail to learn weights that realise this
capacity.

More broadly, the $P,Q$ framework makes the role of spatiotemporal tension explicit
in temporal computation, highlighting its impact on state persistence, error
exposure, effective recurrence, and future credit for linear networks. This opens up several avenues
for future work, including nonlinear extensions and geometry-aware
architectures for long-range learning in tPC.

\bibliography{refs}

\appendix

\section{Proofs for Spatial Inference}

This appendix assumes finite-dimensional stalks and cochain spaces.

\subsection{Directed Slice Dynamics}

Spatial edges respond reciprocally, so $R_s^\ast=D_s^\ast$. Their free-state contribution is $-\delta^\ast\Sigma(\delta z_t-b_t)$. For a temporal residual entering slice $t$, the target block of $R_\tau$ is the identity, giving $-\Lambda(z_t-a_t)$ at the receiver. The source block of an outgoing temporal edge is zero, so that residual contributes no force to slice $t$. Hence
\begin{equation*}
  \dot z_t
  =-\delta^\ast\Sigma(\delta z_t-b_t)
  -\Lambda(z_t-a_t),
\end{equation*}
which depends only on slice $t$ and the incoming proposal $a_t=Az_{t-1}^\star$.

\subsection{Spatial Residual Decomposition}
\label{app:spatial-residual-decomposition}

Let $s_t=\delta^\ast\Sigma b_t$. Since
\begin{equation*}
  \langle x,\mathcal Lx\rangle
  =\lVert\delta x\rVert_{\Sigma}^2,
\end{equation*}
we have $\ker\mathcal L=\ker\delta$. Hence, for every $x\in\ker\mathcal L$,
\begin{equation*}
  \langle x,s_t\rangle
  =\langle\delta x,\Sigma b_t\rangle=0.
\end{equation*}
Because $\mathcal L$ is self-adjoint,
$(\ker\mathcal L)^\perp=\operatorname{im}\mathcal L$, so
$s_t\in\operatorname{im}\mathcal L$. It follows that
$\mathcal L\mathcal L^\dagger s_t=s_t$, so
$y_t=\mathcal L^\dagger s_t$ satisfies the normal equations.

Differentiating the spatial least-squares term at $y_t$ gives
\begin{equation*}
  0
  =\delta^\ast\Sigma(\delta y_t-b_t)
  =\mathcal L y_t-\delta^\ast\Sigma b_t.
\end{equation*}
For $\eta_t=b_t-\delta y_t$, this normal equation implies
\begin{equation*}
  \delta^\ast\Sigma \eta_t
  =\delta^\ast\Sigma(b_t-\delta y_t)=0.
\end{equation*}
The cross term in
\begin{equation*}
  \lVert\delta z_t-b_t\rVert_{\Sigma}^2
  =\lVert\delta(z_t-y_t)-\eta_t\rVert_{\Sigma}^2
\end{equation*}
therefore vanishes, leaving $\lVert \eta_t\rVert_{\Sigma}^2$ as an additive constant and proving the reduced slice energy.

\subsection{Inference Operators}

Since $\mathcal L\succeq0$ and $\Lambda\succ0$,
$\mathcal L+\Lambda\succ0$ is invertible. The equilibrium equation is equivalent to
\begin{equation*}
  (\mathcal L+\Lambda)z_t^\star=\Lambda a_t+\mathcal L y_t,
\end{equation*}
which yields $z_t^\star=Pa_t+Qy_t$ and
\begin{equation*}
  P+Q=(\mathcal L+\Lambda)^{-1}(\Lambda+\mathcal L)=I.
\end{equation*}
Consequently, for $m_t=a_t-y_t$,
\begin{equation*}
  z_t^\star-y_t
  =P(a_t-y_t)=Pm_t,
  \qquad
  a_t-z_t^\star
  =Q(a_t-y_t)=Qm_t.
\end{equation*}

For force balance,
\begin{equation*}
  \mathcal L P
  =
  \bigl((\mathcal L+\Lambda)-\Lambda\bigr)
  (\mathcal L+\Lambda)^{-1}\Lambda
  =
  \Lambda-\Lambda(\mathcal L+\Lambda)^{-1}\Lambda
  =
  \Lambda Q,
\end{equation*}
so $\mathcal L Pm_t=\Lambda Qm_t$ for every mismatch $m_t$. This is the equality of the spatial force $\mathcal L(z_t^\star-y_t)$ and the opposing temporal force $\Lambda(a_t-z_t^\star)$.

Finally, with the current observation and weights fixed, $y_t$, $P$, $Q$, and $A$ are independent of $z_{t-1}^\star$. Differentiating
\begin{equation*}
  z_t^\star=PAz_{t-1}^\star+Qy_t
\end{equation*}
therefore gives $\partial z_t^\star/\partial z_{t-1}^\star=PA=F$.

\subsection{Mode Law}

Since $\Sigma\succ0$, the spatial sheaf Laplacian satisfies
\begin{equation*}
  \mathcal L^\ast=\mathcal L,
  \qquad
  \langle x,\mathcal Lx\rangle
  =\langle\delta x,\Sigma\delta x\rangle
  =\lVert\delta x\rVert_\Sigma^2\geq0.
\end{equation*}
Thus the spectral theorem gives an orthogonal decomposition into
eigenspaces of $\mathcal L$. Under the isotropic temporal precision
$\Lambda=\lambda I$, let $\mathcal L v_i=\ell_iv_i$ and
$\mu_i=\ell_i/\lambda$. Applying
$P=\lambda(\mathcal L+\lambda I)^{-1}$ to
$v_i$ yields
\begin{equation*}
  Pv_i
  =\frac{\lambda}{\ell_i+\lambda}v_i
  =\frac{1}{1+\mu_i}v_i.
\end{equation*}
Likewise,
\begin{equation*}
  Qv_i
  =\frac{\ell_i}{\ell_i+\lambda}v_i
  =\frac{\mu_i}{1+\mu_i}v_i.
\end{equation*}
Therefore $p_i=(1+\mu_i)^{-1}$,
$q_i=\mu_i(1+\mu_i)^{-1}$, and $p_i+q_i=1$, proving the mode law.

\subsection{One-Slice Persistence--Error Identity}
\label{app:one-slice-inference-identity}

From $P+Q=I$, every $x\in H_{\mathrm f}$ satisfies $x=Px+Qx$. Expanding its
squared $\Lambda$-norm gives
\begin{equation*}
  \lVert x\rVert_\Lambda^2
  =\lVert Px\rVert_\Lambda^2
  +\lVert Qx\rVert_\Lambda^2
  +2\langle Px,Qx\rangle_\Lambda.
\end{equation*}
The force-balance identity $\mathcal L P=\Lambda Q$ implies
\begin{align*}
  \langle Px,Qx\rangle_\Lambda
  &=\langle Px,\Lambda Qx\rangle \\
  &=\langle Px,\mathcal L Px\rangle \\
  &=\langle\delta Px,\Sigma\delta Px\rangle
    =\lVert\delta Px\rVert_\Sigma^2.
\end{align*}
Substitution proves
\begin{equation*}
  \lVert x\rVert_\Lambda^2
  =\lVert Px\rVert_\Lambda^2
  +\lVert Qx\rVert_\Lambda^2
  +2\lVert\delta Px\rVert_\Sigma^2.
\end{equation*}

In the isotropic case, the identity can be read modewise. Let
$\Lambda=\lambda I$ and $x=cv_i$. Dividing each term by the total mismatch
energy $\lVert x\rVert_\Lambda^2$ gives the persistence, exposure, and spatial-interference shares
\begin{equation*}
  p_i^2,
  \qquad
  q_i^2,
  \qquad
  2\mu_i p_i^2
  =2p_iq_i
  =\frac{2\mu_i}{(1+\mu_i)^2},
\end{equation*}
respectively. These shares sum to $(p_i+q_i)^2=1$. The spatial-interference
share has derivative
\begin{equation*}
  \frac{2(1-\mu_i)}{(1+\mu_i)^3}.
\end{equation*}
It therefore increases up to $\mu_i=1$ and decreases thereafter, attaining
its maximum value $1/2$ when $p_i=q_i=1/2$.

\subsection{General Temporal Precision}

The modal result extends to every $\Lambda\succ0$ after changing to the
$\Lambda$-metric.  Define
\begin{equation*}
  \widehat{\mathcal L}
  =\Lambda^{-1/2}\mathcal L\Lambda^{-1/2}.
\end{equation*}
This operator is self-adjoint and positive semidefinite.  Since
\begin{equation*}
  \mathcal L+\Lambda
  =\Lambda^{1/2}(I+\widehat{\mathcal L})\Lambda^{1/2},
\end{equation*}
the inference operators satisfy
\begin{equation*}
  \Lambda^{1/2}P\Lambda^{-1/2}
  =(I+\widehat{\mathcal L})^{-1},
  \qquad
  \Lambda^{1/2}Q\Lambda^{-1/2}
  =\widehat{\mathcal L}(I+\widehat{\mathcal L})^{-1}.
\end{equation*}
Let $u_i$ be an orthonormal eigenvector of $\widehat{\mathcal L}$ with eigenvalue
$\mu_i\geq0$, and put $v_i=\Lambda^{-1/2}u_i$.  Then
\begin{equation*}
  \mathcal L v_i=\mu_i\Lambda v_i,
  \qquad
  \langle v_i,\Lambda v_j\rangle=\delta_{ij}.
\end{equation*}
Applying the two similarity-transformed inference operators to $u_i$
and transforming back gives
\begin{equation*}
  Pv_i=\frac{1}{1+\mu_i}v_i,
  \qquad
  Qv_i=\frac{\mu_i}{1+\mu_i}v_i.
\end{equation*}
Thus the persistence and exposure gains retain the same form for general
temporal precision; only the diagonalising basis and inner product change.

\section{Proofs for Temporal Credit}

\subsection{Multi-Step Influence and Backward Transport}

Fix the observations and weights over the interval from $t$ to $t+n$.
For each $s>t$, the inferred dynamics are
\begin{equation*}
  z_s^\star=Fz_{s-1}^\star+Qy_s.
\end{equation*}
The second term is independent of $z_{s-1}^\star$, so the one-step
Jacobian is
\begin{equation*}
  \frac{\partial z_s^\star}{\partial z_{s-1}^\star}=F.
\end{equation*}
For $n=1$, this gives the claimed expression directly.  If the claim
holds for $n-1$, the chain rule gives
\begin{align*}
  \frac{\partial z_{t+n}^\star}{\partial z_t^\star}
  &=
  \frac{\partial z_{t+n}^\star}{\partial z_{t+n-1}^\star}
  \frac{\partial z_{t+n-1}^\star}{\partial z_t^\star} \\
  &=F^n.
\end{align*}
Induction therefore proves the multi-step influence formula for every
$n\geq1$.

Now let a scalar contribution $\phi_{t+n}$ depend on the current state
only through $z_{t+n}^\star$, and write
$g_{t+n}=\nabla_{z_{t+n}^\star}\phi_{t+n}$.  For a perturbation
$d z_t^\star$,
\begin{align*}
  d\phi_{t+n}
  &=\left\langle
      g_{t+n},
      \frac{\partial z_{t+n}^\star}{\partial z_t^\star}
      d z_t^\star
    \right\rangle \\
  &=\left\langle
      \left(
        \frac{\partial z_{t+n}^\star}{\partial z_t^\star}
      \right)^\ast g_{t+n},
      d z_t^\star
    \right\rangle.
\end{align*}
Hence
\begin{equation*}
  g_t
  =(F^n)^\ast g_{t+n}
  =(F^\ast)^n g_{t+n}.
\end{equation*}
If the future objective is a sum of contributions at several times, its
gradient at $z_t^\star$ is the corresponding sum of these adjoint
pullbacks.

\subsection{Reduced Slice Energy and Proposal Sensitivity}

Suppress the time index and let $m=a-y$.  At equilibrium,
\begin{equation*}
  z^\star-y=Pm,
  \qquad
  z^\star-a=-Qm.
\end{equation*}
Using the spatial residual decomposition from
\hyperref[app:spatial-residual-decomposition]{Appendix~\ref*{app:spatial-residual-decomposition}}, the reduced
slice energy is
\begin{equation*}
  \widetilde E(a,b)
  =\frac{1}{2}\lVert h\rVert_\Sigma^2
   +\frac{1}{2}\lVert\delta Pm\rVert_\Sigma^2
   +\frac{1}{2}\lVert Qm\rVert_\Lambda^2.
\end{equation*}
The force-balance identity gives
\begin{equation*}
  \langle Pm,\Lambda Qm\rangle
  =\langle Pm,\mathcal L Pm\rangle
  =\lVert\delta Pm\rVert_\Sigma^2.
\end{equation*}
Since $m=Pm+Qm$, it follows that
\begin{equation*}
  \widetilde E(a,b)
  =\frac{1}{2}\lVert h\rVert_\Sigma^2
   +\frac{1}{2}\langle m,\Lambda Qm\rangle.
\end{equation*}
Moreover,
\begin{equation*}
  \Lambda Q
  =\Lambda(\mathcal L+\Lambda)^{-1}\mathcal L
  =\Lambda-\Lambda(\mathcal L+\Lambda)^{-1}\Lambda,
\end{equation*}
which is self-adjoint because $\mathcal L+\Lambda$ is self-adjoint and
positive definite. Holding $b$ fixed makes $y$ and $h$ independent of $a$, so
$d m=d a$.  Differentiating the reduced energy therefore yields
\begin{equation*}
  \nabla_a\widetilde E
  =\Lambda Qm
  =\Lambda Q(a-y).
\end{equation*}
The equilibrium state is affine in the proposal,
\begin{equation*}
  z^\star=Pa+Qy.
\end{equation*}
With the observation and weights fixed, $P$, $Q$, and $y$ do not vary
with $a$.  Thus
\begin{equation*}
  d z^\star=P\,d a,
  \qquad
  \frac{\partial z^\star}{\partial a}=P.
\end{equation*}

\subsection{Temporal Credit Split}

Restore the time index and let
$g_t=\nabla_{z_t^\star}\Phi_t$.  The chain rule and the proposal
sensitivity above give, for every perturbation $d a_t$,
\begin{align*}
  d\Phi_t
  &=\langle g_t,d z_t^\star\rangle \\
  &=\langle g_t,P\,d a_t\rangle \\
  &=\langle P^\ast g_t,d a_t\rangle.
\end{align*}
Consequently,
\begin{equation*}
  \nabla_{a_t}\Phi_t=P^\ast g_t.
\end{equation*}
Adding this derivative to the reduced current-slice derivative proves
the temporal credit split:
\begin{equation*}
  \nabla_{a_t}\bigl(\widetilde E_t+\Phi_t\bigr)
  =\Lambda Q(a_t-y_t)+P^\ast g_t.
\end{equation*}

\subsection{Temporal-Weight Gradient and Modal Gains}
\label{app:temporal-weight-gradients}

Let
\begin{equation*}
  c_t
  :=\Lambda Q(a_t-y_t)+P^\ast g_t
  =\nabla_{a_t}\bigl(\widetilde E_t+\Phi_t\bigr).
\end{equation*}
Since $a_t=Az_{t-1}^\star$ and
$A=\bigoplus_{\ell=1}^L A_\ell$, writing $[u]_\ell$ for the layer-$\ell$
component of $u$ gives the per-transition layerwise chain rule
\begin{equation*}
  \nabla_{A_\ell}^{(t)}
  =\left[
      \Lambda Q(a_t-y_t)+P^\ast g_t
    \right]_\ell
    (z_{\ell,t-1}^\star)^\ast,
  \qquad \ell=1,\ldots,L.
\end{equation*}

Finally, in the isotropic case $\Lambda=\lambda I$, both $P$ and $Q$ are
self-adjoint functions of $\mathcal L$. With
$m_i=\Pi_i(a-y)$ and $g_i=\Pi_i g$, projection onto spatial mode $i$
gives
\begin{equation*}
  \Pi_i\Lambda Q(a-y)=\lambda q_i m_i,
  \qquad
  \Pi_iP^\ast g=p_i g_i.
\end{equation*}
Thus $q_i$ is the modewise exposure gain on the local temporal-error signal and $p_i$
is the modewise persistence gain on returned future credit. Their
identity $p_i+q_i=1$ is a statement about these operator gains; the
realised magnitudes can also depend on the distinct vectors $m_i$ and
$g_i$.

\section{Proofs for Long-Term Persistence and Mode Routing}
This appendix assumes finite-dimensional stalks and cochain spaces.

\subsection{Single-Mode Delayed Exposure}

Let $x_t=cv_i$ and assume $Av_i=v_i$.
The mode law gives $Pv_i=p_iv_i$ and $Qv_i=q_iv_i$.  Hence
\begin{equation*}
  x_{t+1}=PAx_t=p_ix_t,
  \qquad
  e_{t+1}=QAx_t=q_ix_t.
\end{equation*}
If $x_{t+n-1}=p_i^{n-1}x_t$, then another recurrent and inference step
gives
\begin{equation*}
  x_{t+n}=PAx_{t+n-1}=p_i^nx_t.
\end{equation*}
Induction proves the state formula for all $n\geq1$, and the report at
the final step is
\begin{equation*}
  e_{t+n}=QAx_{t+n-1}=q_ip_i^{n-1}x_t.
\end{equation*}

Using the mode law, define the delayed exposure gain
\begin{equation*}
  f_n(\mu)
  :=q(\mu)p(\mu)^{n-1}
  =\frac{\mu}{(1+\mu)^n},
  \qquad \mu\geq0.
\end{equation*}
For $n>1$,
\begin{equation*}
  f_n'(\mu)
  =\frac{1-(n-1)\mu}{(1+\mu)^{n+1}}.
\end{equation*}
The derivative is positive below $\mu^\star=(n-1)^{-1}$ and negative
above it, whilst $f_n(0)=0$ and $f_n(\mu)\to0$ as $\mu\to\infty$.
Therefore this critical point is the unique global maximiser, and
\begin{equation*}
  f_n(\mu^\star)
  =\frac{1}{n}\left(\frac{n-1}{n}\right)^{n-1}
  \sim\frac{1}{en}.
\end{equation*}
For $n=1$, $f_1(\mu)=\mu/(1+\mu)$ increases towards a supremum of one as
$\mu\to\infty$, so there is no finite maximiser. Thus the finite-maximiser
result applies only when $n>1$.

\subsection{Mode-Routing Path Expansion}

Let $\mathcal I=\{1,\ldots,M\}$ index the eigenspaces of $\mathcal L$. Insert
the spectral decompositions of $P$ and $Q$, together with
$I=\sum_{i\in\mathcal I}\Pi_i$, into $QA(PA)^{n-1}$.  This product has
$n$ factors of $A$, $n-1$ intermediate factors of $P$, and one final
factor of $Q$.  Expanding the products gives
\begin{align*}
  QA(PA)^{n-1}
  &=\sum_{i_0,\ldots,i_n\in\mathcal I}
    q_{i_n}
    \left(\prod_{k=1}^{n-1}p_{i_k}\right)
    \Pi_{i_n}A\Pi_{i_{n-1}}A\cdots
    \Pi_{i_1}A\Pi_{i_0} \\
  &=\sum_{i_0,\ldots,i_n\in\mathcal I}
    q_{i_n}
    \left(\prod_{k=1}^{n-1}p_{i_k}\right)
    A_{i_ni_{n-1}}\cdots A_{i_1i_0}.
\end{align*}
For $n=1$, the product of persistence gains is the empty product and is
equal to one.  Multiplying the operator identity by $x_t$ proves the
path expansion for $e_{t+n}$.

\section{Sheaf-Theoretic Derivations}
\label{apx: sheaf derivation}

\newcommand{\Em}[1]{\textbf{#1}}
\newcommand{\pois}{\mathcal{P}}
\def\N{\mathbb{N}}
\def\calF{\mathcal{F}}
\def\calH{\mathcal{H}}
\def\Z{\mathbb{Z}}
\def\Q{{\bf Q}}
\def\R{\mathbb{R}}
\def\C{\mathbb{C}}
\def\CC{{\cal C}}
\def\S{{\mathcal S}}
\def\Var{{\rm Var} \,}
\def\ee{\varepsilon}
\def\E{{\mathbb E}}
\def\cE{{\mathcal E}}
\def\P{{\mathbb P}}
\def\Cox{\hfill \Box}
\def\half{\frac{1}{2} \,}
\def\disp{\displaystyle}
\def\one{{\bf 1}}
\def\F{{\mathcal F}}
\def\calG{{\mathcal G}}
\def\GG{{\mathbb G}}
\def\calV{{\mathcal V}}
\def\calD{{\mathcal D}}
\def\calN{{\mathcal N}}
\def\calE{{\mathcal E}}
\def\B{{\mathcal B}}
\def\A{{\mathcal A}}
\def\xx{{\mathbf x}}
\def\yy{{\mathbf y}}
\def\ww{{\mathbf w}}
\def\zz{{\mathbf z}}
\def\g{{\mathfrak g}}
\def\h{{\mathfrak h}}
\def\d{{\mathtt d}}
\def\dalg{{\mathtt{d}_\mathrm{alg}}}
\def\AN{\aleph_0}
\def\cupdot{{\mathop{\dot{\cup}}}}

\def\dom{{\text{dom}}}
\def\zC{{C^0(\calG;\GG)}}
\def\oC{{C^1(\calG;\GG)}}
\def\zH{{H^0(\calG;\GG)}}
\def\oH{{H^1(\calG;\GG)}}
\def\ad{{\mathrm{ad}}}
\def\Ad{{\mathrm{Ad}}}
\def\cvx{{\mathrm{cvx}}}
\def\Valg{{V_{\mathrm{alg}}}}
\newcommand{\so}{\mathfrak{so}}
\newcommand{\gtwo}{\mathfrak{g}_2}
\newcommand{\Gtwo}{G_2}
\newcommand{\Hslice}[1]{\mathcal{H}^{ #1 }_{\mathrm{slice},t}}
\newcommand{\Eslice}{E_{\mathrm{slice}}}

\newcommand{\an}[1]{\| #1 \|_{\mathrm{alg}}}

\def\calL{{\mathcal L}}

In this appendix, we present an alternative derivation of the essential structure of linear temporal predictive coding networks. While the derivation in the main body is primarily focused on dynamics, the present appendix will instead take an sheaf-theoretic and Hodge-theoretic angle, focused on the relative cohomology of a sub-sheaf of full tPC network \citep{seely2025sheaf,curry2014sheaves}.

\subsection{The Time-Slice Sheaf}
We first consider a sub-network of a tPC network.

\begin{definition}
The \textbf{time-slice graph} $\calG^{(t)}$ at time $t \geq 1$ of a linear tPC network consists of the following data.

\begin{center}
\begin{tikzpicture}[
  x=3.25cm,
  y=1.15cm,
  >=Latex,
  activation/.style={inner sep=3pt},
  map/.style={->, thin},
  temporal label/.style={font=\small, inner sep=1.5pt},
  spatial label/.style={font=\small, inner sep=1pt,
    anchor=east, xshift=-5pt}
]
  \fill[black!4] (0.54,0.40) rectangle (1.46,4.42);

  \node[activation] (zm1L) at (0,4) {$v^{(t-1)}_L$};
  \node[activation] (ztL)  at (1,4) {$v^{(t)}_L$};

  \node[activation] (zm1Lm1) at (0,3) {$v^{(t-1)}_{L-1}$};
  \node[activation] (ztLm1)  at (1,3) {$v^{(t)}_{L-1}$};

  \node at (0,2.4) {$\vdots$};
  \node at (1,2.4) {$\vdots$};

  \node[activation] (zm11) at (0,1.8) {$v^{(t-1)}_1$};
  \node[activation] (zt1)  at (1,1.8) {$v^{(t)}_1$};

  \node[activation] (zm10) at (0,0.8) {$v^{(t-1)}_0$};
  \node[activation] (zt0)  at (1,0.8) {$v^{(t)}_0$};

  \draw[map] (ztL)  -- node[midway, spatial label] {$e^{s}_{L-1}$} (ztLm1);
  \draw[map] (zt1)  -- node[midway, spatial label] {$e^{s}_{0}$} (zt0);

  \draw[map] (zm1L) -- node[temporal label, above] {$e^{\tau}_{L}$} (ztL);
  \draw[map] (zm1Lm1) -- node[temporal label, above] {$e^\tau_{L-1}$} (ztLm1);
  \draw[map] (zm11) -- node[temporal label, above] {$e^\tau_1$} (zt1);
  \draw[map] (zm10) -- node[temporal label, above] {$e^\tau_0$} (zt0);

  \node[font=\small] at (0,4.65) {time $t-1$};
  \node[font=\small\bfseries] at (1,4.65) {time $t$};
  \node[font=\small, anchor=east] at (-0.62,4) {$\ell=L$};
  \node[font=\small, anchor=east] at (-0.62,3) {$\ell=L-1$};
  \node[font=\small, anchor=east] at (-0.62,1.8) {$\ell=1$};
  \node[font=\small, anchor=east] at (-0.62,0.8) {$\ell=0$};
\end{tikzpicture}
\end{center}

\end{definition}

\begin{remark}
    The time-slice graph contains all spatial edges at time-slice $t$, as well as the incoming temporal edges from time-slice $t-1$. Consequently, all relevant information for the time $t$ inference and learning steps are present.
\end{remark}

From the time-slice graph, we derive the time-slice sheaf.

\begin{definition}
    The \textbf{time-slice sheaf} $\mathcal{F}^{(t)} :  \mathcal{G}^{(t)} \to \mathbf{FinHilb}_{\R}$ at time $t \geq 1$ is the cellular sheaf consisting of the following data.
    \begin{enumerate}
        \item for each vertex $v_\ell^\bullet$, a vertex stalk $V_\ell^\bullet := V_\ell$, where $V_\ell$ is the activation space at layer $\ell$ of the tPC network. Each $V_\ell$ is endowed with an inner product, denoted by $\langle-,- \rangle$.
        \item For each temporal edge $v^{(t-1)}_\ell \xrightarrow{e^\tau_\ell} v^{(t)}_\ell$, $0\leq\ell\leq L$, an edge stalk $V^\tau_{\ell} := V_\ell$, with source restriction map $A_\ell: V_\ell^{(t-1)} \to V_\ell^{\tau}$ and target restriction map $I: V_\ell^{(t)} \to V_\ell^{\tau} $. Each temporal edge stalk $V_\ell^\tau$ is endowed with an inner product $\langle - , - \rangle_{\Lambda_\ell} := \langle - , \Lambda_\ell(-) \rangle$.
        \item For each spatial edge $v^{(t)}_{\ell+1} \xrightarrow{e^s_\ell} v^{(t)}_\ell$, $0\leq\ell\leq L-1$, an edge stalk $V^s_\ell:=V_\ell$, with source restriction map $W_\ell:V_{\ell+1}^{(t)}\to V_\ell^s$ and target restriction map $I:V_\ell^{(t)}\to V_\ell^s$. Each spatial edge stalk $V_\ell^s$ is endowed with an inner product $\langle - , - \rangle_{\Sigma_\ell} := \langle - , \Sigma_\ell(-) \rangle$.
    \end{enumerate}
\end{definition}
The time-slice sheaf $\mathcal{F}^{(t)}$ is shown in Figure~\ref{fig: time-slice sheaf}.
\begin{figure}
    \[\begin{tikzcd}
	{V_{L}^{(t-1)}} & {\color{red}{V_L^{\tau}}} & {V_L^{(t)}} \\
	&& {\color{blue}{V_{L-1}^s}} \\
	{V_{L-1}^{(t-1)}} & {\color{red}{V_{L-1}^{\tau}}} & {V_{L-1}^{(t)}} \\
	\vdots & \vdots & \vdots \\
	{V_1^{(t-1)}} & {\color{red}{V_1^{\tau}}} & {V_1^{(t)}} \\
	&& {\color{blue}{V_0^s}} \\
	{V_0^{(t-1)}} & {\color{red}{V_0^{\tau}}} & {V_0^{(t)}}
	\arrow["{A_L}", from=1-1, to=1-2]
	\arrow["I"', from=1-3, to=1-2]
	\arrow["{W_{L-1}}", from=1-3, to=2-3]
	\arrow["{A_{L-1}}", from=3-1, to=3-2]
	\arrow["I"', from=3-3, to=2-3]
	\arrow["I"', from=3-3, to=3-2]
	\arrow[from=3-3, to=4-3]
	\arrow["{A_1}", from=5-1, to=5-2]
	\arrow[from=5-3, to=4-3]
	\arrow["I"', from=5-3, to=5-2]
	\arrow["{W_0}", from=5-3, to=6-3]
	\arrow["{A_0}", from=7-1, to=7-2]
	\arrow["I"', from=7-3, to=6-3]
	\arrow["I"', from=7-3, to=7-2]
\end{tikzcd}\]
    \caption{The time-slice sheaf $\mathcal{F}^{(t)}$ associated to a linear tPC network. Temporal and spatial edge-stalks are shown in red and blue respectively. The left-most column represents the layers of the network at time $t-1$, while the black vertex stalks in the right-most column represent the layers of the network at time $t$. }
    \label{fig: time-slice sheaf}
\end{figure}

\begin{remark}
    We make two remarks about the time-slice sheaf.
    \begin{enumerate}
        \item The superscripts on the $V_j$ stalks are purely decorative, and used to further disambiguate temporal edge stalks $(-)^{\tau}$, spatial edge stalks $(-)^{s}$, and time-slice vertex stalks $(-)^{(t)}$. As a vector space, each stalk is simply the underlying $V_\ell$. 
        \item The time-slice sheaf is not a directed cellular sheaf, but a classical one. In particular, all edges are symmetric broadcaster/receivers. Since the previous equilibria $z^{\star}_{t-1}$ are all clamped during inference, there is no advantage to maintaining ``causal'' edges.
    \end{enumerate}
\end{remark}

\begin{definition}
    The \textbf{cochain spaces} associated to the time-slice sheaf are 
    \begin{align*}
        C^0(\calG^{(t)}; \calF^{(t)} )  & :=  \left ( \bigoplus_{\ell = 0}^L V_\ell^{(t-1)} \right ) \oplus \left ( \bigoplus_{\ell = 0}^L V_\ell^{(t)} \right ) \\
        C^1(\calG^{(t)}; \calF^{(t)} )  & :=  \left ( \bigoplus_{\ell = 0}^{L-1} V_\ell^{s} \right ) \oplus \left ( \bigoplus_{\ell = 0}^L V_\ell^{\tau} \right )
    \end{align*}
\end{definition}

\begin{remark}
    The cochain spaces $C^\bullet(\calG^{(t)}; \calF^{(t)} )$ each come equipped with an inner product via the direct sum. The zero cochains $C^0(\calG^{(t)}; \calF^{(t)} ) $ inherit the direct sum of all $\langle -,- \rangle$ inner products. We will always use $\langle - , - \rangle$ to refer to any direct sum of these undecorated inner products. Letting
    $$\Sigma := \mathrm{diag}(\Sigma_0, \ldots, \Sigma_{L-1}) \, , \qquad \Lambda_+ := \mathrm{diag}(\Lambda_0, \ldots, \Lambda_L) $$
    take $M_+$ to be the block-matrix
    $$M_+ := \begin{bmatrix} \Sigma & 0 \\ 0 & \Lambda_+ \end{bmatrix} \, . $$
    The inner product on $C^1(\calG^{(t)}; \calF^{(t)} )$ is exactly $\langle-,-\rangle_{M_+} := \langle-,M_+(-)\rangle$. 
\end{remark}

\begin{definition}
    The \textbf{total coboundary} $\delta_{\mathrm{tot}} : C^0(\calG^{(t)}; \calF^{(t)} ) \to C^1(\calG^{(t)}; \calF^{(t)} )$ is given by its image on each edge-stalk
    $$(\delta_{\mathrm{tot}}\xx)_\ell^\tau = \xx_\ell^{(t)} - A_\ell \xx_\ell^{(t-1)} \quad (0\leq \ell\leq L), \qquad  (\delta_{\mathrm{tot}}\xx)_\ell^s = \xx_\ell^{(t)} - W_\ell \xx_{\ell+1}^{(t)} \quad (0\leq \ell\leq L-1) \, . $$
\end{definition}

\begin{definition}
    A \textbf{global section} of $\calF^{(t)}$ is a zero cochain $\xx \in C^0(\calG^{(t)} ; \calF^{(t)})$ such that $\delta_{\mathrm{tot}} \xx = 0$. 
\end{definition}

\begin{remark}
    A global section represents a choice of activations at timeslices $t-1$ and $t$ such that there is no spatial or temporal error in the $t$ slice. 
\end{remark}

\subsection{Relative Cohomology}

\begin{definition}
    We introduce the following convenient spaces. 
    \begin{align*}
        H_f & := \bigoplus_{\ell = 1}^L V_\ell^{(t)} \\
        \mathcal{E}^s & := \bigoplus_{\ell = 0}^{L-1} V_\ell^s \\
        \mathcal{E}^\tau & := \bigoplus_{\ell = 1}^L V_\ell^\tau
    \end{align*}
\end{definition}

\begin{remark}
    The spaces $H_f$ and $\calE^\tau$ have canonically identified underlying vector spaces $\bigoplus_{\ell=1}^L V_\ell$, whereas $\calE^s$ has underlying vector space $\bigoplus_{\ell=0}^{L-1}V_\ell$. Each space carries its own direct-sum inner product. We denote the inner product on $H_f$ by an undecorated $\langle-,-\rangle$, and use $(-)^*$ for adjoints with respect to the underlying undecorated inner products. We denote the inner products on $\calE^s$ and $\calE^\tau$ by $\langle-,-\rangle_\Sigma := \langle-,\Sigma(- ) \rangle$ and $\langle-,-\rangle_\Lambda:= \langle-,\Lambda(-)\rangle$ respectively, with block-diagonal precision matrices
    $$\Sigma := \mathrm{diag}(\Sigma_0, \ldots, \Sigma_{L-1}), \qquad \Lambda := \mathrm{diag}(\Lambda_1, \ldots, \Lambda_L) \,. $$
    We write $I:H_f\to\calE^\tau$ for the canonical identification and, when no confusion can arise, use it implicitly in expressions such as $\delta v$ for $v\in\calE^\tau$.
\end{remark}

\begin{definition}
    Let $\calF^{(t)}$ be the time-slice sheaf at slice $t\geq1$ of a linear tPC network. The \textbf{boundary sub-complex} $B$ consist of the vertices $\left \{ v_{j}^{(t-1)} \right \}_{j=0}^L \cup \{v_0^{(t)} \}$, along with the single edge $e_0^\tau$.
\end{definition}

\begin{remark}
    The boundary sub-complex is the full subcomplex of $\mathcal{G}^{(t)}$ containing all data which is ``clamped'' during inference at time-slice $t$. 
\end{remark}

\begin{definition}
    The \textbf{relative cochain spaces} associated to the time-slice sheaf $\calF^{(t)}$ are
    \begin{align*}
        C^0(\calG^{(t)}, B ; \calF^{(t)} ) & := \bigoplus_{\ell = 1}^L V_{\ell}^{(t)} \\
        & = H_f\\
        C^1(\calG^{(t)}, B; \calF^{(t)} ) & := \left ( \bigoplus_{\ell = 0}^{L-1} V_\ell^s\right ) \oplus \left ( \bigoplus_{\ell = 1}^L V_{\ell}^{\tau} \right ) \\
        & = \calE^s \oplus \calE^\tau \, . 
    \end{align*}
\end{definition}

\begin{remark}
    For ease of reading, we will refer to these spaces of relative cochains through their identification with $H_f$ and $\calE^s \oplus \calE^\tau$.
\end{remark}

Recall that $\delta:H_f\to\calE^s$ denotes the spatial coboundary restricted to the free vertex stalks. Explicitly, for $z=(z_1,\ldots,z_L)^\top\in H_f$,
\begin{equation*}
    (\delta z)_0^s=-W_0z_1,
    \qquad
    (\delta z)_\ell^s=z_\ell-W_\ell z_{\ell+1}
    \quad (1\leq\ell\leq L-1).
\end{equation*}

\begin{definition}
    The \textbf{relative coboundary} operator $\calD: H_f \to \calE^s \oplus \calE^\tau$ associated to the relative cochain complexes is the block operator
    $$\mathcal{D}  := \begin{bmatrix}
        \delta \\ I
    \end{bmatrix} \, .$$
    This relative coboundary induces the \textbf{relative cochain complex}
\[\begin{tikzcd}
	{H_f} && {\calE^s \oplus \calE^\tau}
	\arrow["\calD", from=1-1, to=1-3]
\end{tikzcd}\]
    with \textbf{relative cohomology} groups
    $$ \mathcal{H}^0(\calG^{(t)}, B;\calF^{(t)}) := \ker(\calD) \, , \qquad \calH^1(\calG^{(t)}, B;\calF^{(t)}) := \frac{\calE^s \oplus \calE^\tau}{\mathrm{image}(\calD)} \, .  $$
\end{definition}

\begin{remark}
    We will refer to the relative cohomology groups $\calH^\bullet(\calG^{(t)}, B;\calF^{(t)})$ as \textbf{slice-cohomology} groups, and use the notation $\Hslice{\bullet}$.
\end{remark}

\begin{proposition}\label{pr: zero slice cohomology}
    The degree zero slice cohomology is zero; $\Hslice{0} \cong 0$.
\end{proposition}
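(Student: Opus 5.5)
The degree-zero slice cohomology is by definition $\ker(\calD)$, where $\calD = \begin{bmatrix}\delta \\ I\end{bmatrix}: H_f \to \calE^s\oplus\calE^\tau$. The plan is to show directly that $\calD$ is injective. The relevant observation is that the second block of $\calD$ is the canonical identification $I: H_f\to\calE^\tau$. This is an isomorphism of underlying vector spaces, since both are $\bigoplus_{\ell=1}^L V_\ell$ and only their inner products differ.

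Concretely, I would take $z\in\ker(\calD)$. Then both components of $\calD z$ vanish, and in particular $Iz = 0$ in $\calE^\tau$. Because $I$ is the identity on the underlying space $\bigoplus_{\ell=1}^L V_\ell$, this forces $z=0$. So $\ker(\calD)=0$ and hence $\Hslice{0}\cong 0$. The spatial component $\delta z = 0$ is not needed at all, which is why the argument does not depend on the weights $W_\ell$.

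As a cross-check tying this to the main text, I would also note the metric version. With the inner product $\langle-,-\rangle_{M}$ on $\calE^s\oplus\calE^\tau$, we have $\calD^\ast M\calD = \delta^\ast\Sigma\delta + \Lambda = \mathcal L+\Lambda \succ 0$. This operator is invertible, so $\calD$ is injective. It is the same positivity that guaranteed the unique inference equilibrium $z_t^\star$.

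There is no real obstacle here. The only point needing care is to be explicit that the identification $I:H_f\to\calE^\tau$ is a linear isomorphism even though the two spaces carry different inner products. The bookkeeping also has to use the relative complex, in which the clamped layer $\ell=0$ and the edge $e^\tau_0$ are excluded, so that $\calE^\tau$ ranges over exactly the same layers $1,\dots,L$ as $H_f$.

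Interpretively, every free activation carries an incoming temporal edge. A relative global section would therefore have to agree with a zero temporal proposal, which pins it down to zero.
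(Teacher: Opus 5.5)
Your proof is correct and is the same as the paper's: since the lower block of $\mathcal{D}$ is the canonical identification $I:H_f\to\mathcal{E}^\tau$, $\mathcal{D}z=0$ forces $z=0$. Your cross-check via $\mathcal{D}^\ast M\mathcal{D}=\mathcal{L}+\Lambda\succ 0$ is also valid; the paper uses the same identity later to obtain a unique harmonic extension.
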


\begin{proof}
    Since the lower block of $\calD$ is the identity map, $\calD z = 0 \iff z = 0$.
\end{proof}

\begin{proposition}
    \label{pr:H^1 computation}
    The degree one slice cohomology is isomorphic to the space of spatial edge stalks; $\Hslice{1} \cong \calE^s$.
\end{proposition}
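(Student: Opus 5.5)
We must show that $\calE^s\oplus\calE^\tau/\operatorname{im}\calD\cong\calE^s$. The plan is to exhibit an explicit surjective linear map $\Psi:\calE^s\oplus\calE^\tau\to\calE^s$ whose kernel is exactly $\operatorname{im}(\calD)$, and then apply the first isomorphism theorem. The natural candidate is
\begin{equation*}
    \Psi(\sigma,\tau):=\sigma-\delta\tau,
\end{equation*}
using the canonical identification $I:H_f\to\calE^\tau$ to apply $\delta$ to $\tau$. This is the same move that gives the frustration $\kappa_t=\delta a_t-b_t$: the temporal component is transported into the spatial edge space and compared there.

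The argument then runs in three steps.
\begin{enumerate}
    \item \emph{$\Psi$ kills $\operatorname{im}\calD$.} For $z\in H_f$ we have $\calD z=(\delta z,z)$, so $\Psi(\calD z)=\delta z-\delta z=0$. This gives $\operatorname{im}\calD\subseteq\ker\Psi$.
    \item \emph{$\ker\Psi\subseteq\operatorname{im}\calD$.} Suppose $\Psi(\sigma,\tau)=0$, so that $\sigma=\delta\tau$. Setting $z:=I^{-1}\tau\in H_f$ gives $\calD z=(\delta z,z)=(\sigma,\tau)$.
    \item \emph{$\Psi$ is surjective.} For any $\sigma\in\calE^s$ we have $\Psi(\sigma,0)=\sigma$.
\end{enumerate}
The first isomorphism theorem then yields $\Hslice{1}\cong\calE^s$. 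The isomorphism is implemented by $[\sigma,\tau]\mapsto\sigma-\delta\tau$, with inverse $\sigma\mapsto[\sigma,0]$.

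As a consistency check, we can count dimensions. By Proposition~\ref{pr: zero slice cohomology}, $\calD$ is injective, so
\begin{equation*}
    \dim\operatorname{im}\calD=\dim H_f=\dim\calE^\tau,
\end{equation*}
and the quotient has dimension $\dim\calE^s$. This agrees with the map above.

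There is no real obstacle here. The only point needing care is to state clearly that $\delta$ acts on $\calE^\tau$ via the identification $I$, since the inner products on $H_f$ and $\calE^\tau$ differ but the underlying vector spaces are the same. If a Hodge-theoretic version is wanted later, one could instead identify $\Hslice{1}$ with $(\operatorname{im}\calD)^{\perp_{M}}$ in the $\Sigma\oplus\Lambda$ inner product. For the isomorphism as stated, however, the algebraic quotient argument suffices.
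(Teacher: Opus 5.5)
Your proposal is correct and matches the paper's proof: the paper uses the same map $\chi(u,v)=u-\delta v$, states that $\ker\chi=\operatorname{im}\mathcal{D}$ and that $\chi$ is surjective, and then passes to the quotient. Your three verification steps and the explicit inverse $\sigma\mapsto[\sigma,0]$ supply the details the paper describes as ``straightforward to check''.
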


\begin{proof}
    We construct an explicit isomorphism. Let $\chi: \calE^s \oplus \calE^\tau \to \calE^s$ denote the map
    $$\chi(u,v) = u - \delta v \, . $$
    It is straightforward to check that $\ker(\chi) = \text{image}(\mathcal{D})$ and $\text{image}(\chi) = \calE^s$. Taking the quotient of the domain by $\ker(\chi)$ yields an isomorphism
    \begin{equation*}
        \overline{\chi} : \Hslice{1} \xlongrightarrow{\cong} \calE^s \, . \qedhere
    \end{equation*}
    \renewcommand*{\jmlrQED}{\par\bigskip}
\end{proof}

\subsection{Spatiotemporal Frustration}

\begin{definition}
    Let $a_t \in \calE^\tau$ be a fixed collection of temporal edge activations, and $o_t \in V_0^{(t)}$ a fixed activation in layer zero. The \textbf{boundary excitation} associated to these data is $c_t \in \calE^s \oplus \calE^\tau$
    $$c_t := \begin{bmatrix}
        -b_t \\ -a_t
    \end{bmatrix}$$
    where $b_t \in \calE^s$ is the block vector whose $e_0^{s}$ component is $(b_t)_{e_0^s}:=-o_t$, and which is zero on all other spatial edge stalks.
\end{definition}

\begin{remark}
    We think of $o_t$ as the time $t$ input to the tPC network. Similarly, with $A:=\bigoplus_{\ell=1}^L A_\ell:H_f\to\calE^\tau$ under the canonical identification above, we think of $a_t$ as the equilibrated free activations of time slice $t-1$ pushed forward by the learned temporal map: $a_t=Az_{t-1}^\star$. However, the following analysis holds for arbitrary data $a_t$ and $o_t$ of the indicated types.
\end{remark}

\begin{definition}
    Given a boundary excitation $c_t$, the \textbf{slice residual}  $r_t(z_t) \in \calE^s \oplus \calE^\tau$ of a free activation $z_t \in H_f$ is given by
    $$r_t(z_t) := \calD z_t + c_t \, . $$
\end{definition}
Thus the two blocks of the slice residual are
\begin{align*}
    r_t^s(z_t)&=\delta z_t-b_t
    =\bigl(o_t-W_0z_{1,t},\,z_{1,t}-W_1z_{2,t},\,\ldots,\,z_{L-1,t}-W_{L-1}z_{L,t}\bigr)^\top,\\
    r_t^\tau(z_t)&=z_t-a_t.
\end{align*}

\begin{remark}
    Proposition~\ref{pr: zero slice cohomology} ($\Hslice{0} \cong 0$) implies that for any boundary excitation $c_t$, there is a unique free activation $z_t$ that minimizes the norm of the slice residual $r_t(z_t)$. On the other hand, Proposition~\ref{pr:H^1 computation} ($\Hslice{1} \cong \calE^s)$ says that the obstructions to the existence of a free activation with zero slice residual can be put in one-to-one correspondence with $\calE^s$. This latter observation motivates the following definition.
\end{remark}

\begin{definition}
    Let $c_t = \left ( \begin{bmatrix} -b_t &  -a_t \end{bmatrix} \right )^\top$ be a fixed boundary excitation. The \textbf{spatiotemporal frustration} $\kappa_t \in \calE^s$ of $c_t$ is the quantity
    $$\kappa_t := \delta a_t - b_t \, . $$
\end{definition}

\begin{remark}
    To interpret the spatiotemporal frustration, let $\chi: \calE^s \oplus \calE^\tau \to \calE^s$ be the map from the proof of Proposition~\ref{pr:H^1 computation}. Notice that for any free activation $z_t \in H_f$, applying $\chi$ to the slice residual yields
    \begin{equation}
        \chi(r_t(z_t)) = \kappa_t
    \end{equation}
    This identity holds for all activations $z_t \in H_f$. In particular, it holds for $z_t = 0$, yielding $\chi(c_t) = \kappa_t$. One recovers $\overline{\chi}([c_t]) = \kappa_t$. Hence $\kappa_t$ represents the slice cohomology class of $c_t$ in $\Hslice{1}$, and measures the obstruction to extending the boundary data with zero residual on every edge outside $B$.
\end{remark}

We formalize this remark with the following theorem.

\begin{theorem}
    The following are equivalent for a timeslice sheaf with boundary excitation $c_t$ and spatiotemporal frustration $\kappa_t$. 
    \begin{enumerate}
        \item $[c_t] = 0$; 
        \item $\kappa_t = 0$;
        \item there exists an activation $z_t \in H_f$ with zero slice residual;
        \item the boundary data inducing $c_t$ admit an extension with zero residual on every edge outside $B$;
        \item the temporal proposal $a_t$ already satisfies the current spatial constraints $\delta a_t = b_t$. 
    \end{enumerate}
\end{theorem}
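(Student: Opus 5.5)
I would prove the equivalences as a cycle of implications, built on two facts already set out in this appendix. The first is the isomorphism $\overline{\chi}:\Hslice{1}\to\calE^s$ from Proposition~\ref{pr:H^1 computation}, which satisfies $\overline{\chi}([c_t])=\kappa_t$ by the preceding remark. The second is the identity $\chi(r_t(z_t))=\kappa_t$, which holds for every $z_t\in H_f$. The plan is to establish (1)$\Leftrightarrow$(2) via the isomorphism, (2)$\Leftrightarrow$(5) by unwinding the definition, (3)$\Leftrightarrow$(1) via exactness of the relative complex, and (3)$\Leftrightarrow$(4) by reading off what a zero slice residual means edgewise.

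For (1)$\Leftrightarrow$(2), I would use that $\overline{\chi}$ is injective and that $\overline{\chi}([c_t])=\chi(c_t)=-b_t-\delta(-a_t)=\delta a_t-b_t=\kappa_t$. Hence $[c_t]=0$ exactly when $\kappa_t=0$. The equivalence (2)$\Leftrightarrow$(5) is immediate from $\kappa_t=\delta a_t-b_t$.

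For (3)$\Leftrightarrow$(1): a vanishing slice residual, $\calD z_t+c_t=0$, says precisely that $c_t=\calD(-z_t)\in\operatorname{im}\calD$, i.e.\ $[c_t]=0$. The argument runs in both directions. As a concrete sanity check, I would note that in the direction (5)$\Rightarrow$(3) the witness can be taken to be $z_t=a_t$. Indeed, $r_t^\tau(a_t)=a_t-a_t=0$ and $r_t^s(a_t)=\delta a_t-b_t=0$. Conversely, $r_t^\tau(z_t)=0$ forces $z_t=a_t$, and then $r_t^s=0$ gives (5). This gives a second, direct route that avoids cohomology.

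The remaining step, and the only one needing care, is (3)$\Leftrightarrow$(4), because condition (4) is phrased informally. I would make it precise as follows: an extension of the boundary data (clamped $z^\star_{t-1}$ and $o_t$) is a choice of free values $z_t\in H_f$, and the edges outside $B$ are exactly the spatial edges $e^s_\ell$ for $0\le\ell\le L-1$ and the temporal edges $e^\tau_\ell$ for $1\le\ell\le L$. Their residuals on the full cochain $\xx$ are the components of $\delta_{\mathrm{tot}}\xx$ on $\calE^s\oplus\calE^\tau$. I would then verify that these coincide with $r_t(z_t)=\calD z_t+c_t$. The point to check is that $o_t$ enters only through $(\delta_{\mathrm{tot}}\xx)^s_0=o_t-W_0z_{1,t}$, matching $-b_t$, and that $a_t=Az^\star_{t-1}$ accounts for the clamped temporal sources. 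The bottom temporal edge $e^\tau_0$ lies in $B$ and is therefore excluded. With this identification, (4) is literally (3).
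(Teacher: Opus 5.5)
Your proposal is correct and matches the paper's argument, which the paper leaves implicit in the remark before the theorem and in the later block identities: $\chi(c_t)=\kappa_t$ with $\ker\chi=\operatorname{im}\mathcal{D}$ gives (1)$\Leftrightarrow$(2), (1)$\Leftrightarrow$(3) is simply the definition of $\mathcal{H}^{1}_{\mathrm{slice},t}$ as the cokernel of $\mathcal{D}$ (a definitional step rather than an exactness statement), and your edgewise check for (3)$\Leftrightarrow$(4) is exactly the identities $\delta_{\mathrm{tot}}^{UU}=\mathcal{D}$ and $\delta_{\mathrm{tot}}^{UB}s_t=c_t$ that the paper invokes later in the appendix. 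Your extra observation that the identity temporal block of $\mathcal{D}$ forces any zero-residual witness to be $z_t=a_t$ is a correct, cohomology-free shortcut for (3)$\Leftrightarrow$(5).
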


\subsection{Hodge Theory and Harmonic Extension}
We now explore the relative Hodge theory of the timeslice sheaf. Recall that $\calE^s \oplus \calE^\tau$ comes equipped with an inner product via the direct sum of the inner products $\langle -,- \rangle_\Sigma$ and $\langle -,- \rangle_\Lambda$ on $\calE^s$ and $\calE^\tau$ respectively. We denote this inner product by $\langle -,- \rangle_M := \langle -,M(-) \rangle$, with block-diagonal precision matrix
$$M = \begin{bmatrix} \Sigma & 0 \\ 0 & \Lambda \end{bmatrix} \, . $$

\begin{definition}
    Let $c_t$ be a fixed boundary excitation for the timeslice sheaf $\calF^{(t)}$. The \textbf{slice energy} of $z_t \in H_f$ is the quantity
    \begin{equation}\label{eq: slice energy}
        \Eslice(z_t) := \frac{1}{2} \| \calD z_t + c_t \|_M^2
    \end{equation}
    where $\|-\|_M$ is the norm induced by $\langle -, - \rangle_M$. 
\end{definition}

\begin{definition}
    A free activation $z_t \in H_f$ is \textbf{harmonic} if and only if it minimizes the slice energy $\Eslice(z_t)$. 
\end{definition}

\begin{definition}
    The \textbf{sheaf Laplacian} $\calL_{\mathrm{tot}}: C^0(\calG^{(t)}; \calF^{(t)} )\to C^0(\calG^{(t)}; \calF^{(t)} )$ of the time-slice sheaf $\calF^{(t)}$ is the operator
    $$\calL_{\mathrm{tot}} := \delta_{\mathrm{tot}}^* M_+ \delta_{\textrm{tot}} \, . $$
\end{definition}



\begin{remark}
    Suppose we order the summands of $C^0(\calG^{(t)}; \calF^{(t)} )$ and $C^1(\calG^{(t)}; \calF^{(t)} )$ as
    \begin{align} 
        C^0(\calG^{(t)}; \calF^{(t)}) & = \left [ H_f \right ]  \oplus \left [ \left ( \bigoplus_{\ell = 0}^L V_{\ell}^{(t-1)} \right )  \oplus V_0^{(t)}\right ] \label{eq: C^0 decomp}\\
        C^1(\calG^{(t)}; \calF^{(t)} ) & = \left [ \calE^s \oplus \calE^\tau \right ] \oplus \left [ V_0^\tau\right ]  \label{eq: C^1 decomp} \, . 
    \end{align}
    Write $U$ for the vertex and edge components not contained in $B$. This decomposition separates the components indexed by $U$ from those indexed by $B$.
    With respect to these blocks, 
    The sheaf Laplacian inherits a block structure
    $$\calL_{\mathrm{tot}} = \begin{bmatrix}
        \calL_{\mathrm{tot}}^{UU} & \calL_{\mathrm{tot}}^{UB} \\
        \calL_{\mathrm{tot}}^{BU} & \calL_{\mathrm{tot}}^{BB}
    \end{bmatrix}$$
    One may check that
    \begin{equation}\label{eq: LUU}
        \calL_{\mathrm{tot}}^{UU} = \calD^* M \calD =\calL+ \Lambda 
    \end{equation}
    where $\calL = \delta^* \Sigma \delta$ is the spatial Laplacian on $H_f$.
\end{remark}

\begin{definition}
    Let
    $$s_t:=\bigl((z_{\ell,t-1}^\star)_{\ell=0}^L,o_t\bigr)^\top
    \in\left(\bigoplus_{\ell=0}^L V_\ell^{(t-1)}\right)\oplus V_0^{(t)}$$
    be a boundary condition. Set $a_t=(A_\ell z_{\ell,t-1}^\star)_{\ell=1}^L$, and let $c_t$ be the induced boundary excitation. A \textbf{harmonic extension} of $s_t$ is a collection of free activations $z_t^\star \in H_f$ such that, for some $x$,
    $$\begin{bmatrix}
        \calL_{\mathrm{tot}}^{UU} & \calL_{\mathrm{tot}}^{UB} \\
        \calL_{\mathrm{tot}}^{BU} & \calL_{\mathrm{tot}}^{BB}
    \end{bmatrix}
    \begin{bmatrix}
        z_t^\star \\ s_t
    \end{bmatrix} =
    \begin{bmatrix}
        0 \\
        x
    \end{bmatrix} \, . $$
\end{definition}

\begin{proposition}
    Every choice of $s_t$ admits a unique harmonic extension.
\end{proposition}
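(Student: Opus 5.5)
The harmonic-extension condition only constrains the top block row. The bottom row merely defines $x$, and $x$ is free. So I will show that the top-row equation $\calL_{\mathrm{tot}}^{UU} z_t^\star + \calL_{\mathrm{tot}}^{UB} s_t = 0$ has exactly one solution $z_t^\star\in H_f$. By \eqref{eq: LUU}, $\calL_{\mathrm{tot}}^{UU}=\calD^*M\calD=\calL+\Lambda$. Existence and uniqueness then reduce to invertibility of this block.

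\textbf{Step 1: invertibility.} For $z\in H_f$,
\[
\langle z,\calD^*M\calD z\rangle=\|\calD z\|_M^2=\|\delta z\|_\Sigma^2+\|z\|_\Lambda^2 .
\]
This is strictly positive for $z\neq0$, because $\Lambda\succ0$; equivalently, $\ker\calD=\Hslice{0}=0$ by Proposition~\ref{pr: zero slice cohomology}. Hence $\calL+\Lambda\succ0$ is invertible, and
\[
z_t^\star=-(\calL+\Lambda)^{-1}\calL_{\mathrm{tot}}^{UB}s_t
\]
is the unique solution. Setting $x:=\calL_{\mathrm{tot}}^{BU}z_t^\star+\calL_{\mathrm{tot}}^{BB}s_t$ makes the bottom row hold trivially. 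This gives existence and uniqueness.

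\textbf{Step 2: identification with the slice equilibrium.} To tie this to the rest of the paper, I will compute $\calL_{\mathrm{tot}}^{UB}s_t$ explicitly from $\delta_{\mathrm{tot}}^*M_+\delta_{\mathrm{tot}}$. The coboundary restricted to the $U$-edges and evaluated at $(0,s_t)$ gives exactly the boundary excitation $c_t=(-b_t,-a_t)$: the spatial edge $e_0^s$ picks up $o_t$, and the temporal edges pick up $-A_\ell z^\star_{\ell,t-1}$. The edge $e_0^\tau$ lies in $B$ and involves no free vertex, so it contributes nothing to the $U$ rows. Therefore
\[
\calL_{\mathrm{tot}}^{UB}s_t=\calD^*Mc_t=-\delta^*\Sigma b_t-\Lambda a_t ,
\]
and the top-row equation becomes
\[
(\calL+\Lambda)z_t^\star=\delta^*\Sigma b_t+\Lambda a_t=\calL y_t+\Lambda a_t .
\]
This recovers $z_t^\star=Pa_t+Qy_t$ as in \eqref{eq: zstar}. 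It also shows that the harmonic extension coincides with the minimiser of $\Eslice$, since the top-row equation is its normal equation.

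\textbf{Main obstacle.} The only delicate part is the block bookkeeping in Step 2. The $U$/$B$ split must treat the clamped edge $e_0^\tau$ and the clamped vertex $v_0^{(t)}$ correctly, so that the $UB$ block produces precisely $\calD^*Mc_t$. I will check this edge by edge: each $U$-edge contributes $\calD^*$ applied to its $M$-weighted residual evaluated at the boundary data. Uniqueness itself needs only Step 1.
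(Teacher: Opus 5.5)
Your proposal is correct and follows the paper's proof: harmonicity reduces to the top-row equation $\mathcal{L}^{UU}_{\mathrm{tot}} z_t^\star + \mathcal{L}^{UB}_{\mathrm{tot}} s_t = 0$, and invertibility of $\mathcal{L}^{UU}_{\mathrm{tot}} = \mathcal{L}+\Lambda$ gives existence and uniqueness, which you justify explicitly via $\|\delta z\|_\Sigma^2+\|z\|_\Lambda^2>0$ for $z\neq 0$. Your Step 2 is not needed for this statement; it anticipates the paper's next proposition, which identifies the harmonic extension with the slice-energy minimiser.
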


\begin{proof}
    By definition, $z_t^\star$ is a harmonic extension of $s_t$ if and only if
    \begin{equation}\label{eq: harmonicity}
        \calL_{\mathrm{tot}}^{UU} z_t^\star + \calL_{\mathrm{tot}}^{UB} s_t = 0 \, . 
    \end{equation} Equation~\eqref{eq: LUU} ensures that $\calL_{\mathrm{tot}}^{UU}$ is invertible, guaranteeing the existence of a unique harmonic extension. 
\end{proof}

\begin{definition}
    The \textbf{diffusive operator} $\mathfrak{G}:\calE^s \oplus \calE^\tau \to H_f$ and \textbf{harmonic projector} $\mathfrak{H}: \calE^s \oplus \calE^\tau  \to \calE^s \oplus \calE^\tau $ of the time-slice sheaf $\calF^{(t)}$ are given by 
    \begin{align}
        \mathfrak{G} &= (\calL_{\mathrm{tot}}^{UU})^{-1} \calD^* M \\
        \mathfrak{H} & = I - \calD\mathfrak{G} \, . 
    \end{align}
\end{definition}

\begin{proposition}
    Let $s_t$ be a boundary condition with corresponding boundary excitation $c_t$. The following are equivalent.
    \begin{enumerate}
        \item $z_t^\star$ is the harmonic extension of $s_t$;
        \item $z_t^\star$ minimizes the slice energy $\Eslice(z_t)$ over $z_t \in H_f$;
        \item $z_t^\star = - \mathfrak{G} c_t$. 
    \end{enumerate}
\end{proposition}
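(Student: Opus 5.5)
The plan is to prove the cycle (1)$\Rightarrow$(3)$\Rightarrow$(2)$\Rightarrow$(1), using the block structure of $\calL_{\mathrm{tot}}$ together with the definitions of $\mathfrak G$ and $\Eslice$.

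First I would make the slice energy explicit. Expanding $\Eslice(z_t)=\tfrac12\langle \calD z_t+c_t, M(\calD z_t+c_t)\rangle$ gives a quadratic in $z_t$ with Hessian $\calD^\ast M\calD=\calL_{\mathrm{tot}}^{UU}=\calL+\Lambda\succ0$, by equation~\eqref{eq: LUU}. The energy is therefore strictly convex and has a unique minimiser, characterised by the first-order condition
\[
\calD^\ast M\calD z_t+\calD^\ast M c_t=0.
\]
Solving this condition gives $z_t=-(\calL_{\mathrm{tot}}^{UU})^{-1}\calD^\ast Mc_t=-\mathfrak G c_t$. This establishes (2)$\Leftrightarrow$(3), and it also shows the minimiser is $z_t^\star=Pa_t+Qy_t$, consistent with the main text. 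So (3)$\Rightarrow$(2), and with (2)$\Rightarrow$(3) this closes one link of the cycle.

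Next I would identify the harmonic-extension condition~\eqref{eq: harmonicity}, $\calL_{\mathrm{tot}}^{UU}z_t^\star+\calL_{\mathrm{tot}}^{UB}s_t=0$, with the same first-order condition. This needs the identity $\calL_{\mathrm{tot}}^{UB}s_t=\calD^\ast M c_t$, which I would check directly from $\calL_{\mathrm{tot}}=\delta_{\mathrm{tot}}^\ast M_+\delta_{\mathrm{tot}}$. Restricting $\delta_{\mathrm{tot}}$ to a cochain supported on $B$ and reading off the components on the non-boundary edges $\calE^s\oplus\calE^\tau$ gives:
\begin{itemize}
\item on the temporal edges $\ell\ge1$, the entries $-A_\ell z^\star_{\ell,t-1}=-(a_t)_\ell$;
\item on the spatial edge $e_0^s$, the entry $o_t=-(b_t)_0$;
\item on all other spatial edges, zero.
\end{itemize}
Hence $\delta_{\mathrm{tot}}$ applied to $s_t$, projected to the $U$-edges, is exactly $c_t$. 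The boundary edge $e_0^\tau$ touches no free vertex, so it contributes nothing to the $U$-row. The $UB$ block is the $U$-row of $\delta_{\mathrm{tot}}^\ast M_+\delta_{\mathrm{tot}}$, namely $\calD^\ast M$ applied to the $U$-edge part of $\delta_{\mathrm{tot}}s_t$, which gives $\calL_{\mathrm{tot}}^{UB}s_t=\calD^\ast Mc_t$. Substituting this into~\eqref{eq: harmonicity} yields the first-order condition, so (1)$\Leftrightarrow$(2).

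The main obstacle is this block computation of $\calL_{\mathrm{tot}}^{UB}$. The work lies in tracking signs, confirming that the clamped vertex $v_0^{(t)}$ contributes precisely $-b_t$ through $W_0$, and checking that the $M_+$ weights on the $U$-edges coincide with $M$. Everything else is routine convex quadratic minimisation and uses only the invertibility of $\calL+\Lambda$ already established.
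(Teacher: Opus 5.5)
Your proposal is correct and follows the paper's proof. Both arguments reduce to the normal equations $\mathcal{D}^\ast M\mathcal{D}z_t=-\mathcal{D}^\ast Mc_t$ for the injective $\mathcal{D}$, and both obtain $\mathcal{L}_{\mathrm{tot}}^{UB}s_t=\mathcal{D}^\ast Mc_t$ from the same block facts: $\delta_{\mathrm{tot}}^{UU}=\mathcal{D}$, the boundary edge $e_0^\tau$ does not couple to free vertices, and $\delta_{\mathrm{tot}}^{UB}s_t=c_t$. The differences are cosmetic: you link (1) to (2) through the first-order condition where the paper links (1) to (3) directly, and your announced cycle is really the two equivalences (2)$\Leftrightarrow$(3) and (1)$\Leftrightarrow$(2).
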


\begin{proof} \newline
    $\mathbf{1 \iff 3:}$ Following Equation~\eqref{eq: harmonicity}, the unique harmonic extension $z_t^\star$ can be written as
    $$z_t^\star = -(\calL_{\mathrm{tot}}^{UU})^{-1} \calL_{\mathrm{tot}}^{UB} s_t \, . $$
    With respect to the same decompositions of the cochain spaces (Equation~\eqref{eq: C^0 decomp} and Equation~\eqref{eq: C^1 decomp}), write the total coboundary and precision matrix $M_+$ as 
    $$\delta_{\mathrm{tot}} = \begin{bmatrix}
        \delta_{\mathrm{tot}}^{UU} & \delta_{\mathrm{tot}}^{UB} \\
        0 & \delta_{\mathrm{tot}}^{BB}
    \end{bmatrix} \, \, \qquad M_+ = \begin{bmatrix}
        M & 0 \\ 0 & \Lambda_0
    \end{bmatrix} . 
    $$
    The lower-left block $\delta_{\mathrm{tot}}^{BU}$ is zero because $B$ is a sub-complex. The Laplacian component $\calL_{\mathrm{tot}}^{UB}$ can therefore be written as
    $$\calL_{\mathrm{tot}}^{UB} = (\delta_{\mathrm{tot}}^{UU})^*M\delta_{\mathrm{tot}}^{UB} \, . $$
    Moreover, one may straightforwardly check that $\delta_{\mathrm{tot}}^{UU} = \mathcal{D}$ is the relative coboundary, and that $\delta_{\mathrm{tot}}^{UB}s_t = c_t$. It follows that
    \begin{align*}
        z_t^\star & = -(\calL_{\mathrm{tot}}^{UU})^{-1} \calL_{\mathrm{tot}}^{UB} s_t \\
        & = -(\calL_{\mathrm{tot}}^{UU})^{-1} \calD^* M c_t \\
        & = - \mathfrak{G} c_t \, . 
    \end{align*}
    $\mathbf{2 \iff 3:}$ The slice energy (Equation~\eqref{eq: slice energy}) of an activation $z_t$ is given by
    $$\frac{1}{2}\| \mathcal{D}z_t + c_t \|_M^2 \, . $$
    Minimizing the slice energy over $z_t$ is therefore equivalent to finding the OLS solution to
    $$\mathcal{D}z_t = - c_t$$
    with respect to the metric $M$. Since $\mathcal{D}$ is injective, the OLS solution is given by
    \begin{align*}
        z_t^\star & = -(\calD^* M \calD)^{-1} \calD^*M  c_t \\
        & = - \mathfrak{G}c_t \, . \qedhere
    \end{align*}
    \renewcommand*{\jmlrQED}{\par\bigskip}
\end{proof}

\begin{remark}
    This proposition clarifies the relationship between harmonic extension in the timeslice sheaf and tPC inference. As the unique global minimizer of the convex slice energy $\Eslice(z_t)$, the harmonic extension $z_t^\star$ is the unique critical point of the dynamics
    \begin{align*}
        \dot{z}_t & = - \nabla_{z_t} \Eslice(z_t) \\
        & = - \delta^* \Sigma(\delta z_t - b_t) - \Lambda(z_t - a_t) \, . 
    \end{align*}
    Therefore the harmonic extension $z_t^\star$ is exactly the tPC inference equilibrium. 
\end{remark}

\begin{definition}
    Let $s_t$ be a boundary condition with harmonic extension $z_t^\star$. The \textbf{extension residual} $r_t^\star$ is the quantity
    $$r_t^\star := \calD z_t^\star + c_t \, . $$
\end{definition}

\begin{remark}
    The extension residual measures the failure of $z_t^\star$ to extend $s_t$ with zero residual on every edge outside $B$. In particular, such a zero-residual extension exists if and only if $r_t^\star = 0$.
\end{remark}

\begin{corollary}
        Let $s_t$ be a boundary condition with corresponding boundary excitation $c_t$. The following are equivalent.
    \begin{enumerate}
        \item $r_t^\star$ is the extension residual of $s_t$;
        \item $r_t^\star = \mathfrak{H} c_t$. 
    \end{enumerate}
\end{corollary}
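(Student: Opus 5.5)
The corollary follows directly from the preceding proposition, which identifies the harmonic extension as $z_t^\star=-\mathfrak G c_t$. I will substitute this into the definition of the extension residual and regroup. Since the extension residual of $s_t$ is by definition the single vector $r_t^\star=\calD z_t^\star+c_t$, the equivalence of (1) and (2) amounts to proving the identity $\calD z_t^\star+c_t=\mathfrak H c_t$. Once this holds, a vector is the extension residual if and only if it equals $\mathfrak H c_t$.

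The key steps, in order, are as follows.

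\begin{enumerate}
\item Invoke the preceding proposition, using the equivalence (1)$\iff$(3): the unique harmonic extension of $s_t$ satisfies $z_t^\star=-\mathfrak G c_t$. The proposition also gives existence and uniqueness of the harmonic extension, so $r_t^\star$ is well defined.
\item Compute
\begin{equation*}
r_t^\star=\calD z_t^\star+c_t=-\calD\mathfrak G c_t+c_t=(I-\calD\mathfrak G)c_t.
\end{equation*}
\item Recognise the right-hand side as $\mathfrak H c_t$, by the definition $\mathfrak H=I-\calD\mathfrak G$.
\end{enumerate}

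For the converse direction, suppose a vector $r$ equals $\mathfrak H c_t$. The same computation, read backwards, shows $r=\calD z_t^\star+c_t$, which is the extension residual.

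There is no real obstacle here. The only point needing care is that $c_t$ is the boundary excitation induced by the same boundary condition $s_t$, with $a_t=Az_{t-1}^\star$ and $b_t$ built from $o_t$. This is exactly the identification $\delta_{\mathrm{tot}}^{UB}s_t=c_t$ already established in the proof of the proposition, so the substitution in the second step is legitimate.

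Optionally, I would also note that $\mathfrak H$ is the $M$-orthogonal projector onto $(\operatorname{im}\calD)^{\perp_M}$: one checks $\calD\mathfrak G\calD=\calD$ and that $\calD\mathfrak G$ is $M$-self-adjoint. This confirms that the name ``harmonic projector'' is justified, but the equivalence itself does not require it.
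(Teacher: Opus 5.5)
Your proposal is correct and matches the paper's argument. The paper gives no separate proof, since the corollary follows immediately by substituting $z_t^\star=-\mathfrak{G}c_t$ from the preceding proposition into $r_t^\star=\mathcal{D}z_t^\star+c_t$ and applying the definition $\mathfrak{H}=I-\mathcal{D}\mathfrak{G}$, exactly as you do.
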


\begin{remark}
    The extension residual can be written in terms of the spatiotemporal frustration $\kappa_t$. Letting $\mathcal{K} := ( \Sigma^{-1} + \delta \Lambda^{-1} \delta^*)^{-1}$, one may write
    $$r_t^\star = \begin{bmatrix} \Sigma^{-1} \\ - \Lambda^{-1} \delta^*\end{bmatrix} \mathcal{K} \kappa_t \, . $$
    This observation strengthens the interpretation of $\kappa_t$ as a measurement of irreducible error arising from spatiotemporal tension.
\end{remark}

\subsection{Harmonic-Diffusive Overlap and Learning}
At the learning step, the size of the gradient is determined by the harmonic and diffusive operators. Holding equilibrated free activations $z_{t-1}^\star$ and $z_{t}^\star$ fixed, the following gradients of the slice energy are recovered for the structure maps: 
\begin{align*}
    \nabla_{W_\ell} E_{\text{slice}} &= \Sigma_\ell\left (W_\ell z_{\ell+1,t}^\star - z_{\ell,t}^\star \right ) (z_{\ell+1,t}^\star)^* \\
    & = - \Sigma_\ell(r_t^\star)_{e^s_{\ell}} (z_{\ell+1,t}^\star)^* \qquad (0\leq\ell\leq L-1), \\
    \nabla_{A_\ell} E_{\text{slice}} &= \Lambda_\ell\left ( A_\ell z_{\ell,t-1}^\star - z_{\ell,t}^\star \right )  (z_{\ell,t-1}^\star)^* \\
    & = - \Lambda_\ell (r_t^\star)_{e^\tau_{\ell}} (z_{\ell,t-1}^\star)^* \qquad (1\leq\ell\leq L) \, . 
\end{align*}
In these equations, we adopt the convention that $z^\star_{0,t} = o_t$. 

Hence learning requires both a non-vanishing residual and a non-zero activation. Writing $r_t^\star = \mathfrak{H}_t c_t$ and $z_t^\star = - \mathfrak{G}_t c_t$, where the subscript on the harmonic projector and diffusive operator indicates which time-slice they act on, one obtains
\begin{align*}
    \nabla_{W_\ell} E_{\text{slice}} & =  \Sigma_\ell \left ( \mathfrak{H}_t c_t \right )_{e_\ell^s} \left ( \mathfrak{G}_t c_t \right )_{\ell + 1}^* \\ 
    \nabla_{A_\ell} E_{\text{slice}} & =  \Lambda_\ell \left ( \mathfrak{H}_t c_t \right )_{e_\ell^\tau} \left ( \mathfrak{G}_{t-1} c_{t-1} \right )_{\ell}^* \, .
\end{align*}
Hence learning occurs precisely on edges where both $\mathfrak{G}_\bullet c_\bullet$ and $\mathfrak{H}_\bullet c_\bullet$ are non-negligible. 

\subsection{Recovering the Persistence and Exposure Operators}
We now show where the persistence and exposure operators $P$ and $Q$ emerge naturally in the algebraic account of tPC inference. We begin by decomposing the diffusive operator $\mathfrak{G}: \calE^s \oplus \calE^\tau \to H_f$ as a block operator over the domain:
\begin{align*}
    \mathfrak{G} &= (\calL_{\mathrm{tot}}^{UU})^{-1} \calD^*M \\
    &= (\calL + \Lambda)^{-1} \begin{bmatrix} \delta^* & I \end{bmatrix} \begin{bmatrix}
        \Sigma & 0 \\
        0 & \Lambda 
    \end{bmatrix} \\
    & = (\calL + \Lambda)^{-1} \begin{bmatrix} \delta^* \Sigma & \Lambda \end{bmatrix}  \\
    & = \begin{bmatrix} (\calL + \Lambda)^{-1}  \delta^* \Sigma & P\end{bmatrix} \, . 
\end{align*}

\begin{remark}
    $P := (\calL + \Lambda)^{-1} \Lambda$ is exactly the $\calE^\tau \to H_f$ block of the diffusive operator $\mathfrak{G}$, which in turn is the Moore-Penrose pseudoinverse of $\calD$. 
\end{remark}

This decomposition of $\mathfrak{G}$ gives a corresponding decomposition of the harmonic operator $\mathfrak{H}$:
\begin{align*}
    \mathfrak{H} &:= I - \mathcal{D}\mathfrak{G} \\
    & = \begin{bmatrix}
        I & 0 \\
        0 & I
    \end{bmatrix} - 
    \begin{bmatrix}
        \delta \\ I
    \end{bmatrix}
    \begin{bmatrix}
        (\calL + \Lambda)^{-1}  \delta^* \Sigma & P
    \end{bmatrix} \\
    & = \begin{bmatrix}
        I - \delta  (\calL + \Lambda)^{-1}  \delta^* \Sigma & - \delta P \\
         -(\calL + \Lambda)^{-1}  \delta^* \Sigma & I - P
    \end{bmatrix} \, . 
\end{align*}
From the identity $I = P + Q$, one recovers $Q$ in the lower right block of $\mathfrak{H}$.

\begin{remark}
    $Q := (\calL + \Lambda)^{-1} \calL$ is exactly the $\calE^\tau \to \calE^\tau$ block of the harmonic projector $\mathfrak{H}$, which in turn is the orthogonal projection of $\calE^s \oplus \calE^\tau$ onto the orthogonal complement $\mathrm{image}(\calD)^\perp$ under the inner product $\langle - , - \rangle_M$. 
\end{remark}

\section{Delayed-Recall Experiment}
\label{app:paper-native-tpc}

\paragraph{Inputs and Inference}
For each seed, we randomly sample and fix an orthonormal eigenbasis of the
spatial sheaf Laplacian $\mathcal L=W^\top W$ satisfying
\begin{equation*}
  \mathcal L v_i=\ell_i v_i,
  \qquad
  \langle v_i,v_j\rangle=\delta_{ij}.
\end{equation*}
Define $v_C:=v_1$ as the cue direction, $v_2,\ldots,v_5$ as the four
distractor directions, and $v_E:=v_N=v_{16}$ as the exposed direction.  The
remaining carrier directions receive no distractor input.

We draw the cue $c\sim\mathcal N(0,1)$ and scale its observation by
$\alpha=(1+\varepsilon)/\sqrt{\varepsilon}=10.1$, so that inference gives
$h_0^*=cv_C$.  For $t=1,\ldots,4$, we scale independent Gaussian distractor
observations by $\alpha$ so that their post-inference contributions along
$v_2,\ldots,v_5$ are distributed as $\mathcal N(0,\sigma_d^2I_4)$.  We choose
$\sigma_d=0.1$ to introduce nontrivial interference without overwhelming the
unit-variance cue.  Finally, the training response has amplitude $\alpha c$
along the observation direction corresponding to $v_E$, with zero input
elsewhere.  Starting from
$h_{-1}^*=0$, exact inference at each slice gives
\begin{equation*}
  h_t^*=(I+\mathcal L)^{-1}
  \left(W^\top x_t+Ah_{t-1}^*\right).
\end{equation*}
Here $x_t$ denotes the clamped observation at slice $t$.

\paragraph{Learning Comparison}
Both full temporal credit and standard tPC use the same inference procedure.
Their learning gradients differ.  Writing
$\varepsilon_{b,t}=Ah_{b,t-1}^*-h_{b,t}^*$, standard tPC treats the inferred
states as fixed and uses the gradient
\begin{equation*}
  G_{\mathrm{std}}
  =\frac1B\sum_{b=1}^B\sum_{t=1}^R
    \varepsilon_{b,t}(h_{b,t-1}^*)^\top,
\end{equation*}
to update $A$.  By contrast, full temporal credit differentiates the summed
slice energy through the entire inferred trajectory by backpropagation through
time (BPTT).  Thus, both rules see the full slice energy, but only full credit
propagates its effect to earlier transitions.  We use minibatches of $64$,
$10^4$ Adam steps, and the constraint $\lVert A\rVert_2\leq1$.  Learning rates
are tuned independently by grid search, giving $10^{-2}$ for full credit and
$3\times10^{-4}$ for standard tPC.  We use $15$ paired seeds.
For each seed, we initialise $A=0.98I+0.01U$ with orthogonal $U$, clip its
initial singular values at $0.99$, and enforce $\lVert A\rVert_2\leq1$ by
clipping singular values at one after each Adam update.

\paragraph{Evaluation}
At test time, we set the response observation to zero and use the fixed readout
\begin{equation*}
  \widehat y_R=v_E^\top(Ah_{R-1}^*-h_R^*).
\end{equation*}
We measure test MSE over $2048$ independently sampled sequences.  To examine
the effect of the cue, we compare paired runs with identical distractors and
either a sampled cue or a zero cue.  Define
\begin{equation*}
  \delta_t=h_t^{(c)}-h_t^{(0)},
  \qquad
  Z=\mathbb E\lVert\delta_0\rVert_2^2.
\end{equation*}
The plotted transmission energies are
\begin{equation*}
  z_{C\to C}(t)=\frac{\mathbb E\lVert\Pi_CA\Pi_C\delta_{t-1}\rVert_2^2}{Z},
  \qquad
  z_{C\to E}(t)=\frac{\mathbb E\lVert\Pi_EA\Pi_C\delta_{t-1}\rVert_2^2}{Z}.
\end{equation*}
Here $\Pi_C$ and $\Pi_E$ project onto the carrier and exposure subspaces, and
$Z$ normalises by the initial cue-dependent energy.  Thus, the two quantities
measure recurrent transmission of the cue from $C$ into $C$ and $E$,
respectively.  We also report the delay-averaged carrier energy
$\overline e_C=(4Z)^{-1}\sum_{t=1}^4
\mathbb E\lVert\Pi_C\delta_t\rVert_2^2$.

\paragraph{Results}
After $10^4$ steps, we obtain
\begin{center}
\begin{tabular}{lcc}
\hline
& Full temporal credit & Standard tPC \\
\hline
Median test MSE & $0.00548$ & $0.97057$ \\
Median $\overline e_C$ & $0.94012$ & $0.00121$ \\
Median $z_{C\to E}(R)$ & $0.89095$ & $3.74\times10^{-6}$ \\
Median $\max_{t<R}z_{C\to E}(t)$ & $1.48\times10^{-5}$ & $4.29\times10^{-5}$ \\
Largest $C\to E$ event occurs at response & $15/15$ seeds & $0/15$ seeds \\
\hline
\end{tabular}
\end{center}
This empirical result shows that mode routing occurs in a full-credit setting
but not in standard local tPC.

\begin{figure}[H]
  \centering
  \input{figures/dense_copy_credit_comparison}
  \caption{As in delayed recall, full temporal credit retains the digit in the
  carrier and exposes it at the response, whereas standard local tPC does not.}
  \label{fig:dense-copy-routing}
\end{figure}

\raggedbottom
\section{Overlapping 30-digit Copy}
\label{app:copy-task}
\paragraph{Task.}
We encode 30 digits $d_j$ as scalars $c_j=(d_j-4.5)/\sqrt{8.25}$.
Digit $c_j$ arrives at $t=j$ and is due at $t=j+10$, so input and answer
periods overlap.

\paragraph{Inputs and inference.}
As in the delayed mode-routing experiment, we use one observed and one hidden
layer. Here both have 41 coordinates: $e_{40}$ is the input direction,
$C=\operatorname{span}\{e_1,\ldots,e_{40}\}$ is the carrier, and $v_E=e_0$
is the exposed response direction. We fix
$W_{\mathrm{in}}=\operatorname{diag}(\varepsilon^{-1/2},
\varepsilon^{1/2},\ldots,\varepsilon^{1/2})$ with $\varepsilon=0.001$ and
$\alpha=(1+\varepsilon)/\sqrt{\varepsilon}$. Thus
$p_C=q_E=(1+\varepsilon)^{-1}$, while an observation $\alpha c_j$ contributes
$c_j$ to its hidden coordinate when the temporal proposal is zero. During
training we clamp
\begin{equation*}
  o_t=\begin{cases}
    \alpha c_t e_{40}, & 0\leq t<10,\\
    \alpha(c_t e_{40}+c_{t-10}e_0), & 10\leq t<30,\\
    \alpha c_{t-10}e_0, & 30\leq t<40.
  \end{cases}
\end{equation*}
Starting from $h_{-1}^*=0$, we infer each $h_t^*$ using the same slice
energy and equilibrium as in the delayed mode-routing experiment.

\paragraph{Learning comparison.}
Only $A$ is learned, using the same local-tPC and full-credit BPTT rules as
in the delayed mode-routing experiment. For this task we use minibatches of
16, 300 Adam updates, $\lVert A\rVert_2\leq1$, and learning rates $0.01$
(full credit) and $0.0003$ (local tPC), selected from 100-update pilots. We
report five paired seeds with matched initial $A$, training streams, and test
digit lists.

\paragraph{Evaluation and routing.}
At test time, with $A$ fixed and $h_{-1}^*=0$, we set
$o_t=\alpha c_t e_{40}$ for $t<30$ and $o_t=0$ thereafter. At $t=10+j$
we read $\widehat c_j=v_E^\top(Ah_{t-1}^*-h_t^*)$ and compute MSE against
the 30 original digits over 1,024 new lists per seed. For the routing
diagnostic, we compare runs with $o_0=\alpha e_{40}$ and $o_0=0$, setting all
later observations to zero. Their hidden-state difference satisfies
$\delta_0=e_{40}$ and $\delta_t=PA\delta_{t-1}$. We plot the same
$z_{C\to C}$ and $z_{C\to E}$ transmission energies as in the delayed
mode-routing experiment, now with $Z=1$ and $t=1,\ldots,10$.
The delay-averaged carrier energy is
$\overline e_C=\frac{1}{9}\sum_{t=1}^{9}\lVert\Pi_C\delta_t\rVert_2^2$.

\paragraph{Results}
After 300 updates, we obtain
\begin{center}
\begin{tabular}{lcc}
\hline
& Full temporal credit & Standard tPC \\
\hline
Median test MSE & $0.01141$ & $0.99939$ \\
Median $\overline e_C$ & $0.95611$ & $0.05997$ \\
Median $z_{C\to E}(10)$ & $0.90630$ & $8.23\times10^{-5}$ \\
Median $\max_{t<10}z_{C\to E}(t)$ & $4.16\times10^{-4}$ & $0.00584$ \\
Largest $C\to E$ event occurs at response & $5/5$ seeds & $0/5$ seeds \\
\hline
\end{tabular}
\end{center}

\phantomsection\label{jmlrend}
\end{document}